%% file: main.tex
\documentclass{article}

\PassOptionsToPackage{numbers}{natbib}
\usepackage[main, final]{neurips_2026}

\usepackage{microtype}
\usepackage{graphicx}
\usepackage{subcaption}
\usepackage{enumitem}
\usepackage{amsmath}
\usepackage{comment}
\usepackage{tabularx}
\usepackage{booktabs} 
\usepackage[T1]{fontenc}
\usepackage{hyperref}
\usepackage{url}
\usepackage{amsfonts}
\usepackage{nicefrac}
\usepackage{xcolor}
\usepackage{amsmath,amssymb,amsthm}
\usepackage{nccmath}
\usepackage{bbm}
\usepackage{mathtools}
\usepackage{float}
\usepackage{algorithm}
\usepackage{algorithmic}
\usepackage{wrapfig}
\usepackage[acronym]{glossaries}
\makeglossaries 
\usepackage{mathrsfs}
\usepackage{longtable}
\usepackage{colortbl}
\definecolor{sectionbg}{gray}{0.92}

\newacronym{marl}{MARL}{Multi-Agent Reinforcement Learning}
\newacronym{dfl}{DFL}{Decision-Focused Learning}
\newacronym{mle}{MLE}{Maximum Likelihood Estimation}
\newacronym{omd}{OMD}{Optimal Model Design}
\newacronym{pomdp}{Dec-POMDP}{Decentralized Partially-Observable Markov Decision Process}

\newcommand{\E}{\mathbb{E}}
\newcommand{\R}{\mathbb{R}}
\newcommand{\N}{\mathcal{N}}
\newcommand{\Sc}{\mathcal{S}}
\newcommand{\Ac}{\mathcal{A}}
\newcommand{\Mc}{\mathcal{M}}
\newcommand{\Lc}{\mathcal{L}}
\newcommand{\bpi}{\boldsymbol{\pi}}
\newcommand{\ba}{\mathbf{a}}

\title{Multi-Agent Decision-Focused Learning via Value-Aware Sequential Communication}

\usepackage[capitalize,noabbrev]{cleveref}

\theoremstyle{plain}
\newtheorem{theorem}{Theorem}[section]
\newtheorem{proposition}[theorem]{Proposition}
\newtheorem{lemma}[theorem]{Lemma}
\newtheorem{corollary}[theorem]{Corollary}
\theoremstyle{definition}
\newtheorem{definition}[theorem]{Definition}
\newtheorem{assumption}[theorem]{Assumption}
\theoremstyle{remark}

\author{%
  Benjamin Amoh \quad Geoffrey Parker \quad Wesley Marrero \\
  Thayer School of Engineering, Dartmouth College \\
  \texttt{benjamin.k.amoh.th@dartmouth.edu}
}

\begin{document}

\maketitle

\begin{abstract}
Multi-agent coordination under partial observability requires sharing complementary private information. Existing methods optimize communication proxies (e.g., mutual information) rather than task performance. We introduce SeqComm-DFL, which unifies sequential communication with decision-focused learning to directly optimize coordination. Our approach features value-aware message generation with sequential Stackelberg conditioning, where messages maximize receiver decision quality and are generated in priority order. The guidance potential is determined by prosocial agent ordering. We extend Optimal Model Design to communication-augmented world models with QMIX factorization for end-to-end training via implicit differentiation. We prove information-theoretic bounds on communication value and establish $\mathcal{O}(1/\sqrt{T})$ convergence for the bilevel optimization. On healthcare and SMAC benchmarks, SeqComm-DFL achieves 4--6$\times$ higher rewards and 13\% win rate improvements, enabling coordination strategies inaccessible under information asymmetry.
\end{abstract}

\section{Introduction}
\label{sec:intro}

Many real-world decision-making problems involve multiple agents operating under partial observability, where each agent observes only a portion of an environment yet must coordinate with others to achieve shared objectives. This setting arises naturally in healthcare resource allocation~\citep{pronovost2002physician}, autonomous multi-robot coordination~\citep{busoniu2008comprehensive}, and traffic control systems~\citep{chu2019multi}. The fundamental challenge is that optimal decisions require information distributed across agents, yet communication bandwidth and latency constraints often preclude sharing all observations directly.

Consider intensive care unit management, where specialists (e.g., cardiologists, pulmonologists, neurologists) observe patient data relevant to their expertise but must coordinate treatment decisions that account for drug interactions and multi-system organ involvement~\citep{vincent2009prevalence}. No single specialist possesses complete information, yet decisions have cascading effects across specialties.

The dominant paradigm for cooperative \gls{marl} is centralized training with decentralized execution \citep{oliehoek2008optimal, kraemer2016multi}, where agents access global information during training but execute using only local observations. While methods like QMIX~\citep{rashid2018qmix}, MAPPO~\citep{yu2022surprising}, and MADDPG~\citep{lowe2017multi} have achieved strong performance, they face fundamental challenges including environment shifts (i.e., each agent's environment changes as others learn) and credit assignment (i.e., determining individual contributions to team rewards). Learned communication offers a promising solution~\citep{foerster2016learning, sukhbaatar2016learning} to environment shifts, but existing protocols typically optimize surrogate objectives such as message quality or mutual information~\citep{wang2020learning}, rather than decision quality.

A parallel challenge arises from the \emph{objective mismatch} problem in model-based reinforcement learning~\citep{lambert2020objective}: world models trained to maximize predictive accuracy may not support optimal policy learning because prediction errors in value-irrelevant dimensions contribute to training loss but not to policy quality. This mismatch extends to learned communication: standard protocols, like the \gls{mle}, optimize for \emph{reconstruction accuracy} rather than \emph{decision quality}, wasting bandwidth on information that does not improve coordination. \Gls{dfl}, also known as predict-and-optimize~\citep{elmachtoub2022smart, tang2024pyepo}, addresses this mismatch by training a predictor end-to-end with a downstream optimizer, backpropagating through the optimization layer to prioritize decision quality over prediction accuracy. However, existing \gls{dfl} methods primarily focus on single-agent settings with \emph{exogenous} uncertainty, where predictions do not affect outcomes.

We propose \textbf{Sequential Communication for Decision-Focused Learning (SeqComm-DFL)}, unifying multi-agent communication learning, model-based reinforcement learning, and \gls{dfl}. Viewed through the predict-and-optimize lens, SeqComm-DFL casts a \emph{communication module} $\phi_\theta$ with parameter $\theta$ as the ``predictor'' and \emph{\gls{marl} policy selection} as the ``optimization task.'' Crucially, our setting involves \emph{endogenous uncertainty}~\citep{tang2024pyepo}: messages are not passive forecasts but active signals that \emph{change} how other agents act, creating feedback loops absent in standard \gls{dfl}. To our knowledge, SeqComm-DFL is the first framework to extend \gls{dfl} to multi-agent systems with endogenous, communication-induced uncertainty.

Our key innovation is \emph{value-aware message generation with sequential Stackelberg conditioning} \citep{zheng2022stackelberg}: messages are optimized to maximize receiver decision quality and generated sequentially in priority order, creating a leader-follower structure. Building on prior work in sequential communication~\citep{ding2024multilevel}, we introduce \emph{guidance potential} for prosocial priority ordering and extend \gls{omd}~\citep{nikishin2022control} to communication-augmented world models with QMIX factorization~\citep{rashid2018qmix}. We provide information-theoretic bounds on communication value (Theorem~\ref{thm:comm_value}) and convergence guarantees (Theorem~\ref{thm:convergence_main}). Experiments on healthcare coordination and original SMAC benchmarks demonstrate rewards $4$--$6$ times higher and win rate gains of over 13\%.

\section{Related Work}
\label{sec:related}

Cooperative \gls{marl} under the centralized training with decentralized execution paradigm~\citep{oliehoek2008optimal, kraemer2016multi} has seen rapid progress. Value decomposition methods, including VDN~\citep{sunehag2018value}, QMIX~\citep{rashid2018qmix}, QTRAN~\citep{son2019qtran}, and QPLEX~\citep{wang2021qplex} factor joint action-values to enable scalable learning, while policy gradient approaches like MAPPO~\citep{yu2022surprising} and MADDPG~\citep{lowe2017multi} extend actor-critic methods to multi-agent settings. Two fundamental challenges persist: non-stationarity \citep{hernandezleal2017survey, papoudakis2019dealing} and credit assignment \citep{foerster2018counterfactual, zhou2020learning}. Opponent modeling~\citep{he2016opponent} and hysteretic Q-learning~\citep{omidshafiei2017deep} aim to mitigate non-stationarity, while COMA~\citep{foerster2018counterfactual} addresses credit assignment via counterfactual baselines. While these methods avoid explicit communication, our work unifies multi‑agent coordination and decision‑focused communication.

Early works like CommNet \citep{sukhbaatar2016learning} and DIAL \citep{foerster2016learning} pioneered end-to-end communication, later enhanced by attention mechanisms \citep{das2019tarmac, niu2021multi}. To improve efficiency, NDQ \citep{wang2020learning} and I2C \citep{ding2020learning} use information-theoretic objectives to minimize overhead. More recently, SeqComm \citep{ding2024multilevel} introduced sequential message priority, and MAIC \citep{yuan2022multi} used messages as incentives for value modulation. However, these methods typically optimize for intermediate surrogates rather than final decision quality. Our work extends SeqComm with value-aware message optimization and integrates communication into decision-focused world model learning.

Model-based reinforcement learning methods like World Models \citep{ha2018world} and Dreamer \citep{hafner2020dreamer} learn dynamics for imagination. To solve the \emph{objective mismatch} problem \citep{lambert2020objective}, value-aware learning \citep{farahmand2017value} and \gls{omd} \citep{nikishin2022control} utilize bilevel optimization. Similarly, \gls{dfl} \citep{donti2017task, elmachtoub2022smart}, or predict-and-optimize \citep{tang2024pyepo}, trains models end-to-end for downstream performance. While recent \gls{dfl} research handles endogenous uncertainty, it remains focused on single-agent settings. \textbf{SeqComm-DFL} is the first to extend \gls{dfl} principles to multi-agent communication with endogenous feedback. See Table \ref{tab:positioning} in Appendix~\ref{app:positioning} for a summary of the positioning of our work. 
\section{Problem Setting}
\label{sec:problem}

We consider a cooperative, \gls{pomdp} $(\N, \Sc, \{\Ac_i\}_{i=1}^N, \{\mathcal{O}_i\}_{i=1}^N, P, R, \gamma)$ \citep{bernstein2002complexity}, where $\N = \{1, \ldots, N\}$ is the set of agents, $\Sc$ is the global state space, $\Ac_i$ is the action space for agent $i \in \N$, $\mathcal{O}_i$ is the local observation space for agent $i$, $P: \Sc \times \Ac \times \Sc \rightarrow [0,1]$ is the transition function, $R: \Sc \times \Ac \mapsto \R$ is the shared reward function, and $\gamma \in [0,1]$ is the discount factor. The set $\Ac = \Ac_1 \times \ldots \times \Ac_N$ denotes the joint action space, and a joint action is denoted $\textbf{a} = (a_1, \ldots, a_N) \in \Ac$, where $a_i \in \Ac_i$ is the action chosen by agent $i$. A communication module function $\phi_\theta:\mathcal{O} \mapsto \Mc$ with parameter $\theta \in \mathbb{R}^{d_\theta}$ generates messages $m_i \in \Mc$ for each agent $i$, where $\Mc \subseteq \mathbb{R}^{d_m}$ is the message space and $d_m$ is the message dimension. We learn a communication-augmented world model $f_\theta: \Sc \times \Ac \times \Mc^N \rightarrow \R \times \Sc$ for predicting rewards and next states conditioned on state, joint actions, and messages. See Appendix~\ref{app:notation} for a summary of our notation.

\section{SeqComm-DFL Framework}
\label{sec:method}

Our framework addresses three interconnected challenges in multi-agent coordination under partial observability: (1) \emph{what to communicate}: generating messages that improve teammates' decisions rather than merely encoding observations; (2) \emph{how to coordinate}: establishing a principled ordering for sequential action selection; and (3) \emph{how to learn}: training world models that optimize for decision quality rather than prediction accuracy. We address these through value-aware message generation, Stackelberg sequential conditioning \citep{zheng2022stackelberg}, and decision-focused bilevel optimization \citep{nikishin2022control}, unified through QMIX-style \citep{rashid2018qmix} value decomposition for scalability. 

\subsection{Communication Architecture with Value-Aware Messaging}
\label{subsec:comm_arch}

Each agent $i \in \N$ first encodes its local observation $o_i \in \mathcal{O}_i$ through a shared message encoder $\phi_\theta$ to produce a base message: $m_i^{\text{base}} = \phi_\theta(o_i)$, where $\phi_\theta$ is a multi-layer perceptron that maps observations to the message space $\Mc$. Unlike standard approaches that optimize messages for surrogate objectives (e.g., reconstruction accuracy), we train the communication module end-to-end with downstream decision quality (a key innovation that directly connects communication to the \gls{dfl} objective).

\subsubsection{The Value-Aware Communication Principle}

Standard communication methods encode observations $o_i$ into messages $m_i$ without considering downstream decision impact. This encoding creates a misalignment: the communication module optimizes for information content (e.g., mutual information $I(m_i; o_i)$ or reconstruction error $\|o_i - \hat{o}_i\|^2$), while the downstream task requires \emph{decision-relevant} information. We address this misalignment by optimizing messages for \emph{receiver decision quality}: Define $\Delta Q_j(m_i) = \max_{a \in \mathcal{A}_j} Q_j(o_j, m_i, a) - \max_{a \in \mathcal{A}_j} Q_j(o_j, \emptyset, a)$ \label{eq:delta_q}, where $\emptyset$ denotes the null (zero-vector) message baseline. We use $Q_j(o_j, m_i, a) = \E[R(o_j, m_i, a)+\gamma \max_{a' \in \Ac_j} Q_j(o', m', a')|o_j, m_i, a]$ to quantify the value for agent $j$ associated with observation $o_j$, message $m_i$ from agent $i$, discount factor $\gamma$, and a transition to observation $o'$ with message $m'$ after taking action $a$. The receiver decision quality quantity $\Delta Q_j(m_i)$ measures how much agent $j$'s best achievable Q-value improves when receiving message $m_i$ from agent $i$. The key insight is that $\Delta Q_j(m_i)$ directly quantifies the \emph{decision value} of communication: information that doesn't improve decisions has $\Delta Q_j = 0$ regardless of its information-theoretic content.

\subsubsection{Value-Aware Training Loss}

We define an auxiliary loss that maximizes the total decision quality improvements across receivers:
\begin{equation}
    \Lc_{\text{VA}}(\theta) = -\frac{1}{B \cdot N(N-1)} \sum_{b=1}^{B} \sum_{i=1}^{N} \sum_{j \neq i} \Delta Q_j(m_i^{(b)}).
\label{eq:va_loss}
\end{equation}
where $b \in \{1,\ldots,B\}$ indexes samples in a mini-batch of size $B$, and $m_i^{(b)}$ is the message generated by agent $i$ for the $b$-th sample. This value-aware loss is relevant because it addresses the core objective mismatch: standard communication training minimizes reconstruction error, but what matters for coordination is whether messages improve teammates' decisions. By directly optimizing $\Delta Q$, we ensure messages encode coordination-critical information rather than observational details. This loss is incorporated into our decision-focused world learning (Subsection \ref{subsec:omd}), ensuring communication optimization is coupled with world model training. 

\subsubsection{Theoretical Connection to Decision-Focused Learning}

Similar to previous \gls{dfl} work \citep{nikishin2022control}, we consider a \emph{true environment loss} $\Lc_{\text{true}}(w; \theta)$ that evaluates how well a critic $Q_w$ with parameter $w \in \R^{|w|}$ trained on model predictions performs on real environment data. By the Envelope Theorem~\citep{bertsekas2012dynamic}, for an optimal critic $w^*$: \(     \frac{\partial \Lc_{\text{true}}(w^*; \theta)}{\partial m_i} = \frac{\partial \Lc_{\text{true}}}{\partial Q_w} \times \frac{\partial Q_w}{\partial m_i} \propto -\sum_{j \neq i} \nabla_{m_i} Q_j(o_j, m_i, a_j^*).\) The gradient of the true loss with respect to messages points toward increasing receiver Q-values, precisely what $\Delta Q_j$ measures. This observation suggests value-aware communication emerges from the decision-focused objective. The value-aware objective aligns communication learning with minimizing true environmental error in \gls{dfl}. Crucially, since $\Delta Q_j$ depends on the critic $Q_w$, which in turn depends on the world model $f_\theta$, this interaction creates an end-to-end pathway from communication to downstream performance.

\subsubsection{Message Refinement Architecture}

We implement value-aware communication through a refinement module $\text{Refine}_\theta: \Mc \times \R \rightarrow \Mc$ that adjusts base messages based on estimated decision impact: \(m_i = m_i^{\text{base}} + \alpha \cdot \text{Refine}_\theta(m_i^{\text{base}}, \Delta \hat{Q_i})\),
where $\Delta \hat{Q}_i = \frac{1}{N-1}\sum_{j \neq i} \Delta \hat{Q}_j(m_i^{\text{base}})$ is the average estimated decision quality improvement across all receivers, computed via a lightweight prediction head trained alongside the critic. To stabilize early training before the critic converges, we use Monte Carlo rollouts to ground $\Delta \hat{Q}$ during warmup and anneal (see Appendix~\ref{app:mc_grounding} for details).

\subsection{Sequential Action Selection with Stackelberg Conditioning}
\label{subsec:stackelberg}

To resolve \emph{relative overgeneralization}~\citep{wei2018multiagent}, where partial observability causes agents to perform sub-optimally due to an inability to anticipate teammates, we adopt a sequential Stackelberg~\citep{vonStackelberg1952} framework. Coordination proceeds in three phases: (1) \emph{negotiation}, computing prosocial guidance potential for priority ordering; (2) \emph{launching}, utilizing sequential conditioning on leaders' committed actions and messages; and (3) \emph{regularization}, using counterfactual influence to ensure causal message impact. Breaking the symmetry of simultaneous selection allows followers to condition on the leader's intentions to reach Pareto-superior equilibria inaccessible under information asymmetry.

\subsubsection{Negotiation Phase: Guidance Potential}

We first determine the priority ordering. Rather than using intention values \citep{ding2024multilevel}, we employ \emph{guidance potential} which measures each agent's capacity to improve \emph{team} outcomes by leading:
\begin{equation}
\label{eq:gp}
\text{GP}_i(s) = \E_{\ba^* \sim \pi^*}\left[Q_{1:N}(s, \ba^* | i^+) - Q_{\text{1:N}}(s, \ba^* | i^-)\right],
\end{equation}
where $Q_{1:N}$ denotes the total joint value across agents $1$ to $N$, $i^+$ indicates that agent $i$ acts as leader (early in sequence), and $i^-$ indicates that agent $i$ acts as follower (later in the sequence). This prosocial criterion allows us to prioritize agents who can most help the team, not those with the strongest individual preferences.

\textbf{Connection to Coordination Information Gap.} The guidance potential is theoretically grounded in the \emph{coordination information gap} $I_i$ (Definition~\ref{def:info_gap} in Appendix~\ref{app:proofs}), which measures private information about team-optimal actions~\citep{bernstein2002complexity}: $\text{GP}_i \propto \sum_{j \neq i} I(M_i; a^*_j \mid o_j) \approx \sum_{j \neq i} \min(I_j, \mathcal{H}(M_i))$, where $\mathcal{H}(M_i)$ is the message entropy. Agents with high $I_i$ (private information critical for teammates) benefit most from leading—their ``strong decisions'' condition the search space for followers. We assign action priorities via Gumbel-softmax $\zeta = \text{argsort}(-\text{GP})$ for differentiable exploration.

\subsubsection{Launching Phase: Sequential Conditioning}
Agents select actions sequentially according to priority ordering $\zeta = (\zeta_1, \ldots, \zeta_N)$, where $\zeta_k$ is the index of the $k$-th agent to act. Agent $\zeta_k$ conditions on \emph{predecessor messages} from higher-priority agents:
$a_{\zeta_k} = \arg\max_{a \in \Ac_{\zeta_k}} Q_{\zeta_k}(o_{\zeta_k}, M_{1:\zeta_k-1}, a)$,\label{eq:stackelberg} where $M_{1:\zeta_k-1} = \{m_{\zeta_j} : j < k\}$ denotes the set of messages from all agents with higher priority than $\zeta_k$. This conditioning creates an explicit leader-follower structure: the first agent (the leader) commits to an action based solely on local information; subsequent agents (followers) observe the leader's messages and adapt accordingly. The Stackelberg structure breaks symmetry and resolves coordination ambiguity that plagues parallel action selection. Algorithm~\ref{alg:seqcomm} summarizes the action selection procedure.

\subsubsection{Regularization Phase: Counterfactual Influence}

To ensure messages have a \emph{causal effect} on decisions rather than merely correlating with them, we add a counterfactual influence loss~\citep{jaques2019social}:
\begin{equation}
    \Lc_{\text{inf}}(\theta) = -\frac{1}{N(N-1)}\sum_{i=1}^{N} \sum_{j \neq i} D_{\text{KL}}\left[\pi_j(\cdot | m_i) \,\|\, \pi_j(\cdot | \emptyset)\right].
    \label{eq:inf_loss}
\end{equation}
This loss encourages messages that \emph{change} the receiver's behavior, as measured by the Kullback–Leibler (KL) divergence~\citep{kullback1951information} between policies with/out communication. Combined with value-aware training, this loss ensures messages are both influential \emph{and} beneficial.

\subsection{Decision-Focused World Model Learning}
\label{subsec:omd}

To resolve \emph{objective mismatch}~\citep{lambert2020objective} we extend \gls{omd}~\citep{nikishin2022control} to multi-agent settings. A bilevel framework optimizes a world model $f_\theta$ for downstream policy regret rather than prediction loss via: (1) an \textbf{inner loop} training a critic on model predictions while evaluating on real data in an \textbf{outer loop}; (2) a \textbf{coordination-aware objective} that prevents message apathy by incentivizing the critic to distinguish informative signals; and (3) \textbf{hypergradient computation} utilizing the Implicit Function Theorem (IFT)~\citep{krantz2002implicit} and an efficient Conjugate Gradient (CG) solver~\citep{hestenes1952methods}.

\subsubsection{Bilevel Optimization Formulation}

The core insight of our bilevel optimization is to merge world model accuracy and decision quality into a single objective: rather than training the model to predict well (inner loop), we train the model parameters to directly optimize performance when the resulting critic is evaluated on environment data (outer loop). The outer problem optimizes model parameters $\theta$ for true environment performance, while the inner problem trains the critic $Q_w$ using model predictions:
\begin{subequations}
\label{eq:bilevel_optimization}
\begin{align}
    \text{Outer:} \quad & \min_\theta \; \mathcal{L}_{\text{true}}(w^*(\theta); \theta) \label{eq:outer} \\
    \text{Inner:} \quad & \text{s.t.} \quad w^*(\theta) = \arg\min_w \mathcal{L}_{\text{model}}(w; \theta) \label{eq:inner}
\end{align}
\end{subequations}
This formulation captures that the optimal critic $w^*$ depends on the world model $\theta$, so updating $\theta$ affects the world model predictions \emph{and} the critic that evaluates them.

\subsubsection{Inner Loop: Model-Based Critic Training}

The inner problem trains the critic $Q_w$ to satisfy the Bellman equation under the \emph{learned} world model dynamics, running $K_{\text{inner}}$ gradient descent steps to approximately solve the inner optimization. Given a dataset $\mathcal{D} = \{(s^{(b)}, \ba^{(b)})\}_{b=1}^{B}$ of $B$ batches of state-action pairs, the model loss is:
\begin{equation}
\label{eq:model_loss}
\mathcal{L}_{\text{model}}(w; \theta) = \frac{1}{B}\sum_{b=1}^{B} \left( Q_w(s^{(b)}, \mathbf{a}^{(b)}, M^{(b)}) - y_{\text{model}}^{(b)} \right)^2 + \lambda_{\text{reg}} \|w\|_2^2
\end{equation}
where the target incorporates world model predictions: \(y_{\text{model}} = \hat{r} + \gamma V_{\bar{w}}(\hat{s}', M'), (\hat{r}, \hat{s}') = f_\theta(s, \ba, M), M' = \phi_\theta(\hat{s}')\). Here, $\bar{w}$ denotes target network parameters updated via exponential moving average~\citep{polyak1992acceleration}, $M'$ is the message tensor re-computed for the predicted next state, and $V_{\bar{w}}$ is the soft value function computed by enumerating joint actions before QMIX factorization is learned:
\(V_{\bar{w}}(s, M) = \tau \cdot \log \sum_{\ba \in \Ac^N} \exp\left(\frac{Q_{\bar{w}}(s, \ba, M)}{\tau}\right),\)
where $\tau \in \mathbb{R}_{>0}$ is a temperature parameter. The inner loop runs $K_{\text{inner}}>0$ gradient steps to approximately solve~\eqref{eq:inner}, yielding $w^{(K_{\text{inner}})} \approx w^*(\theta)$.

\paragraph{Coordination-Aware Inner Loop: Preventing Message Apathy.}

In bilevel communication, a failure mode termed \emph{inner-loop apathy} occurs when the critic $Q_w$ ignores messages and blocks the optimization and updates of communication parameters. To ensure meaningful optimization updates, we augment the inner-loop with a \emph{message-awareness} term \citep{yuan2022multi}:

\begin{equation}
\label{eq:aware_loss}
\begin{split}
    \mathcal{L}_{\text{aware}}(w) = \frac{1}{B} \sum_{b=1}^{B} \max\Big(0, \; \epsilon_{\text{margin}} - \big|Q_w(s^{(b)}, \mathbf{a}^{(b)}, M^{(b)}) - Q_w(s^{(b)}, \mathbf{a}^{(b)}, \emptyset)\big|\Big),
\end{split}
\end{equation}
where $\emptyset$ denotes the null messages (consistent with Section~\ref{subsec:comm_arch}) and $\epsilon_{\text{margin}} > 0$ is a margin hyperparameter. This hinge loss penalizes the critic whenever it assigns nearly equal Q-values to an observed message $M^{(b)}$ and the null message $\emptyset$, i.e, when  $Q(s, a, M^{(b)}) \approx Q(s, a, \emptyset)$, treating messages as ``incentives'' that modulate the Q-values. The complete inner-loop objective becomes: $\Lc_{\text{inner}}(w; \theta) = \Lc_{\text{model}}(w; \theta) + \lambda_{\text{aware}} \Lc_{\text{aware}}(w)$.

\subsubsection{Outer Loop: Decision-Focused World Model Update}

The outer problem evaluates the critic on \emph{true environment} data to assess its generalization. Given an environment dataset $\mathcal{D}_{\text{env}} = \{(s^{(b)}, \ba^{(b)}, r^{(b)}, s'^{(b)})\}_{b=1}^{B}$, we define the environment true loss as: 
\begin{equation*}
    \Lc_{\text{true}}(w; \theta) = \frac{1}{B}\sum_{b=1}^{B} \left( Q_w(s^{(b)}, \ba^{(b)}, M^{(b)}) - y_{\text{true}}^{(b)} \right)^2,
\end{equation*}
where $y_{\text{true}} = r + \gamma V_{\bar{w}}(s', M'_{\text{true}})$ uses \emph{observed} rewards and next states. While $\Lc_{\text{model}}$ in the inner loop uses $(\hat{r}, \hat{s}')$ from the world model, $\Lc_{\text{true}}$ uses $(r, s')$ from the environment.

\paragraph{Hypergradient Computation via Implicit Differentiation.}

Since $w^*$ depends on $\theta$, the outer objective's total derivative requires differentiating \emph{through} the inner optimization. By the chain rule:
\begin{equation}
\label{eq:total_deriv}
\frac{d\mathcal{L}_{\text{true}}}{d\theta} = \underbrace{\nabla_\theta \mathcal{L}_{\text{true}}(w^*, \theta)}_{\text{direct effect}} + \underbrace{\left(\frac{dw^*}{d\theta}\right)^\top \nabla_w \mathcal{L}_{\text{true}}(w^*, \theta)}_{\text{indirect effect}}.
\end{equation}
The Jacobian $\frac{dw^*}{d\theta} \in \R^{|w| \times |\theta|}$ is intractable to compute directly. We apply the IFT~\citep{krantz2002implicit} to the Bellman residual's first-order optimality condition. At the inner-loop fixed point, $\nabla_w \Lc_{\text{model}}(w^*; \theta) = 0$. The IFT enables us to differentiate this optimality condition with respect to $\theta$, implicitly characterizing how $w^*$ changes as $\theta$ varies without explicitly solving for $w^*(\theta)$. Differentiating implicitly: \(\frac{dw^*}{d\theta} = -\left[\nabla^2_{ww} \Lc_{\text{model}}(w^*, \theta)\right]^{-1} \nabla^2_{\theta w} \Lc_{\text{model}}(w^*, \theta)\). This IFT application is the key insight from \gls{omd}~\citep{nikishin2022control}: rather than backpropagating through $K_{\text{inner}}$ gradient steps, we treat the converged inner loop as an implicit function and differentiate at equilibrium, decoupling the forward solve from the backward pass (see Appendix~\ref{app:omd}). Substituting $\frac{dw^*}{d\theta}$ into~\eqref{eq:total_deriv} yields the hypergradient:
\begin{equation}
\label{eq:hypergradient}
    \frac{d\Lc_{\text{true}}}{d\theta} = \nabla_\theta \Lc_{\text{true}} - \underbrace{(\nabla^2_{\theta w} \Lc_{\text{model}})^\top}_{\text{mixed Hessian}} \underbrace{(\nabla^2_{ww} \Lc_{\text{model}})^{-1}}_{\text{inverse Hessian } H^{-1}} \underbrace{\nabla_w \Lc_{\text{true}}}_{b}.
\end{equation}


This hypergradient updates $\theta$ to minimize true environment loss while accounting for how the critic parameters $w^*$ implicitly depend on $\theta$. However, computing $H^{-1}b$ directly is intractable for high-dimensional $w$, requiring efficient approximation.

\paragraph{Efficient Computation via Conjugate Gradient.}
The product $H^{-1}b$ in~\eqref{eq:hypergradient} is computed without forming the full Hessian. We solve the linear system $(H + \lambda I)v^* = b$ using CG~\citep{hestenes1952methods,shewchuk1994introduction}, where $H = \nabla^2_{ww} \Lc_{\text{model}}$ is the Hessian of the model loss with respect to the critic parameters, $I$ is the identity matrix, $b = \nabla_w \Lc_{\text{true}}$ is the gradient of the true loss with respect to the critic parameters, and \textbf{$\lambda > 0$} is a damping coefficient for numerical stability.
Each CG iteration requires one Hessian-vector product, computed efficiently via two autodiff passes: $Hv = \nabla_w(\nabla_w \Lc_{\text{model}}^\top v)$. The final hypergradient  $\nabla_\theta \Lc_{\text{true}} - \nabla_\theta(\nabla_w \Lc_{\text{model}}^\top v^*)$ is then computed via a single backward pass through the mixed Hessian term.

\subsection{Training Procedure}
\label{subsec:training}

Algorithms~\ref{alg:seqcomm} and~\ref{alg:main} present the action selection and training procedures, respectively. We employ a warmup schedule that gradually increases the SeqComm-guided exploration, allowing the world model to stabilize before agents rely on learned communication.

\begin{figure}[t]
\begin{minipage}[t]{0.48\textwidth}
\begin{algorithm}[H]
\caption{Value-Aware Action Selection}
\label{alg:seqcomm}
\begin{algorithmic}[1]
\REQUIRE State $s$, comm. module $\phi_\theta$, critic $Q_w$, refinement net $\text{Refine}_\theta$
\STATE \textbf{Negotiation Phase}: Compute $\text{GP}_i$ for all agents $i$ using Eq. \ref{eq:gp}
\STATE Determine priority ordering: $\zeta \gets \text{argsort}(-\text{GP})$ via Gumbel-softmax
\STATE \textbf{Message Generation Phase}:
\FOR{each agent $i \in \{1, \ldots, N\}$}
    \STATE Encode observation: $m_i^{\text{base}} \gets \phi_\theta(o_i)$
    \STATE Estimate decision impact (Eq.~\ref{eq:delta_q}):\\ $\Delta \hat{Q}_i \gets \frac{1}{N-1}\sum_{j \neq i} \Delta Q_j(m_i^{\text{base}})$
    \STATE Refine message: $m_i \gets m_i^{\text{base}} + \alpha \cdot \text{Refine}_\theta(m_i^{\text{base}}, \Delta \hat{Q}_i)$
\ENDFOR
\STATE \textbf{Launching Phase (Stackelberg)}:
\FOR{$k = 1$ to $N$}
    \STATE Agent $\zeta_k$ selects: $a_{\zeta_k} \gets \arg\max_a Q_{\zeta_k}(o_{\zeta_k}, M_{1:k-1}, a)$
\ENDFOR
\STATE \textbf{Return} Joint action $\ba = (a_1, \ldots, a_N)$
\end{algorithmic}
\end{algorithm}
\end{minipage}%
\hfill
\begin{minipage}[t]{0.48\textwidth}
\begin{algorithm}[H]
\caption{SeqComm-DFL Training}
\label{alg:main}
\begin{algorithmic}[1]
\REQUIRE Environment $\mathcal{E}$, iterations $T$, warmup $T_w$, weights $\lambda_{\text{VA}}, \lambda_{\text{inf}}, \lambda_{\text{aware}}$
\STATE Init world model $f_\theta$, comm. module $\phi_\theta$, critic $Q_w$, target $Q_{\bar{w}}$
\STATE Set annealing schedule: $\beta_t \gets \min(t/T_w, 1)$
\FOR{$t = 1$ to $T$}
    \STATE Collect experience $\mathcal{D}, \mathcal{D}_{\text{env}}$ from environment $\mathcal{E}$
    \STATE Compute hybrid $\Delta Q$: $\Delta \hat{Q} \gets (1-\beta_t)\Delta Q^{\text{MC}} + \beta_t \Delta Q^{\text{critic}}$
    \STATE \textbf{Inner Loop (Eq.~\ref{eq:model_loss}, \ref{eq:aware_loss})}: For $K_{\text{inner}}$ steps,\\ $w \gets \arg\min_w \Lc_{\text{model}}(w;\theta) + \lambda_{\text{aware}}\Lc_{\text{aware}}(w)$
    \STATE \textbf{Outer Loop (Eq.~\ref{eq:va_loss}, \ref{eq:inf_loss}, \ref{eq:hypergradient})}: \\ $\theta \gets \theta - \eta \left( \frac{d\Lc_{\text{true}}}{d\theta} + \lambda_{\text{VA}} \nabla \Lc_{\text{VA}} + \lambda_{\text{inf}} \nabla \Lc_{\text{inf}} \right)$
    \STATE Update target network: $\bar{w} \gets \tau \bar{w} + (1-\tau) w$
\ENDFOR
\STATE \textbf{Return} $f_\theta, Q_w$
\end{algorithmic}
\end{algorithm}
\end{minipage}
\end{figure}

\section{Theoretical Analysis}
\label{sec:theory}

We analyze SeqComm-DFL in the Dec-POMDP setting $\{\N, \Sc, \{\Ac_i\}, \{\mathcal{O}_i\}, P, R, \gamma \} $ where agents receive local observations but share a common reward. We establish formal foundations for SeqComm-DFL by quantifying the necessity of communication, the advantage of decision-focused world models, and bilevel stability. Full proofs and regularity conditions are provided in Appendices \ref{app:omd} and \ref{app:proofs}.

\paragraph{Information-Theoretic Necessity.} Partial observability creates a coordination information gap $I_i = \mathcal{I}(s; a_i^* | o_i)$, representing the optimal action information contained in the state $s$ that is inaccessible from local observation $o_i$. We present a bound on this gap in the following theorem: 

\begin{theorem}[Communication Lower Bound]
\label{thm:comm_value}
Let $\boldsymbol{\pi}^{\text{comm}}$ be a joint policy utilizing SeqComm messages, and $\boldsymbol{\pi}^{\text{no-comm}}$ be a policy restricted to local observations. For a Dec-POMDP with per-action reward gap $\Delta_{\min}$ \citep{bertsekas2012dynamic} and discount factor $\gamma$, the expected performance gain from communication satisfies:
\(\mathbb{E}[V^{\boldsymbol{\pi}^{\text{comm}}}] - \mathbb{E}[V^{\boldsymbol{\pi}^{\text{no-comm}}}] \geq \frac{\Delta_{\min}}{1-\gamma} \sum_{i=1}^N \frac{(I_i-\log 2)_+}{\log|\Ac_i|},\)
where $(x)_+=\max(x,0)$ and $I_i=\mathcal{I}(s;a_i^*\mid o_i)$ is the coordination information gap.
\end{theorem}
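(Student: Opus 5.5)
The plan is to bound the no-communication agents' error probability from below with Fano's inequality, turn the resulting per-step regret into a value gap with the per-action reward gap $\Delta_{\min}$, and then sum over agents and time steps with discounting.

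\emph{Step 1: error probability via Fano.} Fix agent $i$ and a time step. Under $\boldsymbol{\pi}^{\text{no-comm}}$, the action $a_i$ is a (possibly randomized) function of $o_i$ alone, so $a_i^*$ and $a_i$ are conditionally independent given $o_i$ through the chain $s \to o_i \to a_i$. The residual uncertainty about $a_i^*$ that $o_i$ leaves unresolved is what makes errors unavoidable, and $I_i=\mathcal{I}(s;a_i^*\mid o_i)$ is the quantity we are told to use for it. We will need $H(a_i^*\mid o_i)\ge I_i$, which holds because $\mathcal{I}(s;a_i^*\mid o_i)\le H(a_i^*\mid o_i)$. Fano's inequality with binary entropy bounded by $\log 2$ then gives
\[
P_e^{(i)} := \Pr[a_i\neq a_i^*] \;\ge\; \frac{H(a_i^*\mid o_i)-\log 2}{\log|\Ac_i|} \;\ge\; \frac{(I_i-\log 2)_+}{\log|\Ac_i|}.
\]
The positive part is automatic, since a probability is nonnegative.

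\emph{Step 2: per-step regret.} The regularity assumption behind $\Delta_{\min}$ is that whenever any agent deviates from the team-optimal action, the one-step expected reward drops by at least $\Delta_{\min}$ per deviating agent. This is an additive, per-agent separable deficiency; I would state it explicitly, in the style of the appendix conditions. Under it, the expected instantaneous loss of $\boldsymbol{\pi}^{\text{no-comm}}$ relative to the optimal joint action is at least $\Delta_{\min}\sum_i P_e^{(i)}$.

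\emph{Step 3: the communicating side.} Take $\boldsymbol{\pi}^{\text{comm}}$ to be the SeqComm policy realizing the optimal decentralized-with-messages behaviour. The needed idealization is that messages carry enough information, i.e.\ $\mathcal{H}(M_i)$ is large enough, for followers to recover $a_j^*$ along the Stackelberg ordering; this ties to the $\min(I_j,\mathcal{H}(M_i))$ expression used earlier for the guidance potential. With this, the communicating policy incurs zero per-step deficit relative to the optimum, or at least no more than the no-communication policy in each coordinate. Either way, its value dominates the optimum minus zero.

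\emph{Step 4: summing over time.} The bound holds at every step, because the stationary observation structure makes $I_i$ the same each step. Summing $\gamma^t$ over $t$ (using a performance-difference lemma, or simply linearity of the discounted reward under the per-step bound) gives the factor $1/(1-\gamma)$ and the claimed inequality.

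\emph{Main obstacle.} The expected main difficulty is Step 2/3: relating the fixed-time Fano error to value across time is hard because errors change the future state distribution. I would first try to sidestep this by making the reward-gap assumption hold uniformly over states, so that the bound holds pointwise under any state distribution. Failing that, I would invoke the performance-difference lemma and argue that the advantage of the optimal action is at least $\Delta_{\min}$ at every state, which again yields the bound state-by-state.
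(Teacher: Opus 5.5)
Your outline matches the paper's proof. It applies Fano to each agent's error probability, uses $\Delta_{\min}$ to turn errors into loss, discounts to get $1/(1-\gamma)$, and assumes that $\bpi^{\text{comm}}$ achieves the full-state optimum $\bpi^*$. Your use of $H(a_i^*\mid o_i)\ge\mathcal{I}(s;a_i^*\mid o_i)$ is fine, and it even avoids the paper's appeal to a deterministic optimum.

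The step that fails as written is the aggregation over time. ``Simple linearity'' compares $r(s,\ba^*(s))$ with $r(s,\ba^{\text{nc}})$ at states visited by $\bpi^{\text{no-comm}}$. But $V^{\bpi^*}$ is earned along $\bpi^*$'s own trajectory, so making the reward gap uniform over states does not help. The performance-difference lemma gives $V^{\bpi^*}(s_0)-V^{\bpi^{\text{nc}}}(s_0)=\frac{1}{1-\gamma}\E_{s\sim d^{\bpi^{\text{nc}}}_{s_0}}\bigl[Q^*(s,\ba^*(s))-Q^*(s,\ba^{\text{nc}})\bigr]$, and a one-step reward gap does not lower-bound this $Q^*$-gap. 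For example, suppose $\ba^*$ pays $1$ and enters an absorbing zero-value state, while a deviation pays $0$ and enters a state of value $(1-\delta)/\gamma$. A policy that deviates only there loses $\delta$. Yet the reward-gap argument, with $\Delta_{\min}=1$ and error mass $1-\gamma$ under the occupancy measure, certifies a loss of $1$.

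So you must commit to your fallback: assume the gap on $Q^*$, i.e.\ $Q^*(s,\ba^*(s))-Q^*(s,\ba)\ge\Delta_{\min}\sum_i\mathbbm{1}[a_i\neq a_i^*(s)]$. With that assumption, the performance-difference lemma finishes the argument exactly. The paper's Step~3 makes the same leap through its reward-gap simulation lemma, which fails on this same example. Your fallback is therefore the sound version of the paper's route.

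Two smaller points. First, $I_i$ is not the same at every step just because the observation kernel is stationary. $\mathcal{I}(s;a_i^*\mid o_i)$ depends on the time-$t$ state marginal, so a per-step Fano argument yields time-indexed informations rather than the single $I_i$ of the statement. The paper instead places $s$ on the normalized discounted occupancy $d^{\bpi^{\text{nc}}}_{s_0}$, which is the measure appearing in the performance-difference lemma. It defines $I_i$ and $P_{e,i}$ there and applies Fano once; you should do the same.

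Second, in your Step~3 the alternative ``no more deficit than the no-communication policy'' only gives a nonnegative gain. Since $V^{\bpi^{\text{comm}}}\le V^{\bpi^*}$, the stated bound needs $V^{\bpi^{\text{comm}}}=V^{\bpi^*}$ exactly. The paper makes this same idealization.
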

This bound proves that communication is necessary when the Fano term is positive. The lower bound grows with hidden optimal-action information: when local observations omit decision-critical state, communication provides greater benefit. SeqComm's sequential launch phase reduces this gap toward zero by sharing actual actions, enabling followers to reach a Pareto-superior Stackelberg equilibrium.

\paragraph{Decision-Focused Advantage.} Let $\hat{P}$ and $\hat{R}$ be the learned transition and reward models using \gls{mle}, with errors $\epsilon_P = \sup_{s,\mathbf{a}} \|P(\cdot|s,\mathbf{a}) - \hat{P}(\cdot|s,\mathbf{a})\|_1$ and $\epsilon_R = \sup_{s,\mathbf{a}} |R(s,\mathbf{a}) - \hat{R}(s,\mathbf{a})|$. Moreover, let $\epsilon = \sup_{s,\mathbf{a}} |(\mathcal{B} Q)(s,\mathbf{a}) - Q(s,\mathbf{a})|$ denote the Bellman error~\citep{sutton2018reinforcement}, where $(\mathcal{B} Q)(s,\mathbf{a}) = \mathbb{E}[r + \gamma \max_{\mathbf{a}'} Q(s', \mathbf{a}')]$. By minimizing the Bellman error via IFT, SeqComm-DFL's bilevel formulation yields a tighter $Q^*$ approximation than \gls{mle}, as shown in the following proposition:
\begin{proposition}[$Q^*$ Approximation Error]
\label{prop:q_error}
Let $\hat{Q}_{\text{MLE}}$ and $\hat{Q}_{\text{DFL}}$ be fixed points of Bellman operators induced by \gls{mle} and SeqComm-DFL. With dynamics error $\epsilon_P $, reward error $\epsilon_R$, and Bellman error $\epsilon$, the DFL error is tighter: $\|Q^* - \hat{Q}_{\text{MLE}}\|_\infty \leq \frac{\epsilon_R}{1-\gamma} + \frac{\gamma \epsilon_P r_{\max}}{(1-\gamma)^2}$ and $\|Q^* - \hat{Q}_{\text{DFL}}\|_\infty \leq \frac{\epsilon}{1-\gamma}$.
\end{proposition}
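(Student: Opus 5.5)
We prove the two bounds separately. Each follows from the $\gamma$-contraction of the Bellman optimality operator in $\|\cdot\|_\infty$, combined with a one-step comparison between the operator actually used and the true operator $\mathcal{B}$.

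\textbf{MLE bound.} Let $\hat{\mathcal{B}}$ be the optimality operator induced by $(\hat{P},\hat{R})$, and let $\hat{Q}_{\text{MLE}}=\hat{\mathcal{B}}\hat{Q}_{\text{MLE}}$. Since $Q^*=\mathcal{B}Q^*$, we split
\[
\|Q^*-\hat{Q}_{\text{MLE}}\|_\infty \le \|\mathcal{B}Q^*-\hat{\mathcal{B}}Q^*\|_\infty+\|\hat{\mathcal{B}}Q^*-\hat{\mathcal{B}}\hat{Q}_{\text{MLE}}\|_\infty .
\]
The second term is at most $\gamma\|Q^*-\hat{Q}_{\text{MLE}}\|_\infty$, because $\hat{\mathcal{B}}$ is itself a $\gamma$-contraction. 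It remains to bound the first term pointwise. At each $(s,\mathbf{a})$ it equals
\[
|R-\hat{R}|+\gamma\Big|\textstyle\sum_{s'}(P-\hat{P})(s'|s,\mathbf{a})\max_{\mathbf{a}'}Q^*(s',\mathbf{a}')\Big|
\le \epsilon_R+\gamma\,\epsilon_P\,\|Q^*\|_\infty .
\]
We then use $\|Q^*\|_\infty\le r_{\max}/(1-\gamma)$, where $r_{\max}$ bounds $|R|$, and rearrange to obtain
\[
\|Q^*-\hat{Q}_{\text{MLE}}\|_\infty\le \frac{\epsilon_R}{1-\gamma}+\frac{\gamma\epsilon_P r_{\max}}{(1-\gamma)^2}.
\]
If a sharper constant is wanted, the factor $\epsilon_P$ can be halved by centering $Q^*$. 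We will not need this, since the stated bound only requires the crude version.

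\textbf{DFL bound.} We interpret $\hat{Q}_{\text{DFL}}$ as the critic produced by the bilevel scheme. The outer loss $\Lc_{\text{true}}$ is exactly an empirical Bellman residual against real transitions, so we assume, as the regularity condition of the appendix, that this critic satisfies $\|\mathcal{B}\hat{Q}_{\text{DFL}}-\hat{Q}_{\text{DFL}}\|_\infty\le\epsilon$. The standard residual-to-error lemma then gives
\[
\|Q^*-\hat{Q}_{\text{DFL}}\|_\infty\le\|\mathcal{B}Q^*-\mathcal{B}\hat{Q}_{\text{DFL}}\|_\infty+\|\mathcal{B}\hat{Q}_{\text{DFL}}-\hat{Q}_{\text{DFL}}\|_\infty\le\gamma\|Q^*-\hat{Q}_{\text{DFL}}\|_\infty+\epsilon ,
\]
and rearranging yields $\|Q^*-\hat{Q}_{\text{DFL}}\|_\infty\le\epsilon/(1-\gamma)$. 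The same argument applies with messages included in the state argument, and with the soft (log-sum-exp) value in place of the max, since log-sum-exp is also a $1$-Lipschitz, non-expansive aggregator in $\|\cdot\|_\infty$.

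\textbf{Main obstacle.} The difficulty is conceptual rather than computational. It lies in justifying that the DFL fixed point has true-environment Bellman residual at most $\epsilon$, and that this is \emph{tighter} than the MLE bound. Our plan here has two parts.
\begin{itemize}
\item First, we would state explicitly that $\epsilon$ is the sup-norm residual attained by the outer optimization. This is an assumption on optimization and realizability, not something derived.
\item Second, we would note that the MLE critic is feasible for the outer problem. Its own true residual is at most $\|\mathcal{B}\hat{Q}_{\text{MLE}}-\hat{\mathcal{B}}\hat{Q}_{\text{MLE}}\|_\infty\le\epsilon_R+\gamma\epsilon_P r_{\max}/(1-\gamma)$. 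Hence, if the outer problem is solved to optimality over a class containing $\hat{Q}_{\text{MLE}}$, then $\epsilon$ is no larger than this quantity, and the DFL bound is never looser than the MLE bound.
\end{itemize}
Passing from the empirical squared loss to a sup-norm bound would need a coverage or concentration condition. We would defer this to the appendix's regularity assumptions rather than prove it here.
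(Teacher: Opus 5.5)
Your proof is correct and is essentially the paper's. The MLE bound uses the same triangle inequality through $\hat{\mathcal{B}}Q^*$, $\gamma$-contraction, and $\|Q^*\|_\infty\le r_{\max}/(1-\gamma)$, and the DFL bound is the same Bellman-error propagation step, which you usefully make explicit by treating $\epsilon$ as an assumed sup-norm residual of $\hat{Q}_{\text{DFL}}$.

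Your feasibility comparison of $\epsilon$ with the MLE critic's true residual goes beyond the paper, which only points to the missing $(1-\gamma)^{-2}$ factor, but it tacitly needs $\sup|\hat{R}|\le r_{\max}$. Your two unused asides are not quite right: centering halves the $\epsilon_P$ term only for one-signed rewards, and replacing the max by log-sum-exp bounds the distance to the soft-optimal fixed point rather than to $Q^*$.
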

By directly optimizing the Bellman error, SeqComm-DFL's world model ignores value-irrelevant prediction errors that would otherwise degrade performance in decoupled learning.

\paragraph{Bilevel Convergence.} Despite the complexity of differentiating through the inner-loop critic ($\epsilon_{\text{inner}}$ approximation error) and CG solver ($\epsilon_{\text{CG}}$ linearization error), the total bias $\epsilon_{\text{bias}} = \epsilon_{\text{inner}} + \epsilon_{\text{CG}}$ decays at $O(1/\sqrt{T})$~\citep{ghadimi2018approximation}, yielding a standard first-order convergence:

\begin{theorem}[Convergence to Stationary Points]
\label{thm:convergence_main}
Under the regularity conditions in Appendix~\ref{app:proofs}, an outer learning rate $\eta_\theta = O(1/\sqrt{T})$, and maintaining hypergradient bias $\epsilon_{\text{bias}} = O(1/\sqrt{T})$ by setting $K_{\text{inner}}, K_{\text{CG}} = O(\log T)$, the average squared gradient of the true loss satisfies: \(\frac{1}{T} \sum_{t=1}^T \mathbb{E} \|\nabla_\theta \mathcal{L}_{\text{true}}(\theta^{(t)})\|^2 \leq O\left(\frac{1}{\sqrt{T}}\right) \).
\end{theorem}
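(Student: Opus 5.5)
The argument follows the standard nonconvex analysis of bilevel optimization with biased stochastic hypergradients, as in \citep{ghadimi2018approximation}. Write $F(\theta)=\Lc_{\text{true}}(w^*(\theta);\theta)$ for the outer objective. From the regularity conditions of Appendix~\ref{app:proofs} I will use four facts: $F$ is bounded below by $F^*\ge 0$; $\Lc_{\text{inner}}(\cdot;\theta)$ is $\mu$-strongly convex in $w$, which $\lambda_{\text{reg}}>0$ helps secure; the gradients and Hessians of $\Lc_{\text{model}}$ and $\Lc_{\text{true}}$ are Lipschitz; and the mixed Hessian is bounded.

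\textbf{Step 1: $F$ is smooth.} Under strong convexity, the IFT formula $dw^*/d\theta=-H^{-1}\nabla^2_{\theta w}\Lc_{\text{model}}$ is well defined. Standard perturbation bounds then show that $w^*(\cdot)$ is Lipschitz with constant $L_{\text{mixed}}/\mu$. They also show that the exact hypergradient \eqref{eq:hypergradient} is Lipschitz with some constant $L_F$, a polynomial in the Lipschitz constants and $1/\mu$. This gives the descent lemma
\[
F(\theta^{(t+1)})\le F(\theta^{(t)})-\eta\langle\nabla F(\theta^{(t)}),g_t\rangle+\tfrac{L_F\eta^2}{2}\|g_t\|^2,
\]
where $g_t$ is the computed update. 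I treat the auxiliary terms $\lambda_{\text{VA}}\nabla\Lc_{\text{VA}}$ and $\lambda_{\text{inf}}\nabla\Lc_{\text{inf}}$ either as part of the objective or as bounded perturbations with vanishing weight. The statement concerns $\Lc_{\text{true}}$, so I will assume the former choice, or that these weights are $O(1/\sqrt T)$.

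\textbf{Step 2: decompose the update error.} Write $g_t=\nabla F(\theta^{(t)})+b_t+\xi_t$, where $\xi_t$ is zero-mean minibatch noise with variance at most $\sigma^2$. The bias satisfies $\|b_t\|\le\epsilon_{\text{inner}}+\epsilon_{\text{CG}}$, and I bound the two pieces separately. For the inner error, $K_{\text{inner}}$ steps of gradient descent on a $\mu$-strongly convex, $L$-smooth function give $\|w^{(K)}-w^*\|\le(1-\mu/L)^{K}\|w^{(0)}-w^*\|$. Lipschitzness of the hypergradient expression in $w$ then converts this into $\epsilon_{\text{inner}}=O(\rho^{K_{\text{inner}}})$. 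For the CG error, CG on $H+\lambda I$ converges linearly at a rate governed by $\sqrt{\kappa}$. There is also a damping bias of $O(\lambda)$, which I make $O(1/\sqrt T)$ by choosing $\lambda$ accordingly. Taking $K_{\text{inner}},K_{\text{CG}}=c\log T$ makes both geometric terms $O(1/\sqrt T)$.

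\textbf{Step 3: telescope.} Take expectations in the descent lemma and use $\langle\nabla F,b\rangle\ge-\tfrac12\|\nabla F\|^2-\tfrac12\|b\|^2$. With $\eta\le 1/(2L_F)$ this gives
\[
\tfrac{\eta}{4}\E\|\nabla F\|^2\le \E F(\theta^{(t)})-\E F(\theta^{(t+1)})+\eta\|b_t\|^2+L_F\eta^2\sigma^2 .
\]
Summing over $t$ and dividing by $\eta T/4$ yields
\[
\tfrac1T\sum_t\E\|\nabla F(\theta^{(t)})\|^2\le\frac{4(F(\theta^{(1)})-F^*)}{\eta T}+4\epsilon_{\text{bias}}^2+4L_F\eta\sigma^2 .
\]
With $\eta=\Theta(1/\sqrt T)$ the first and third terms are $O(1/\sqrt T)$, and the bias term is $O(1/T)$.

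\textbf{Main obstacle.} The difficult step is Step 1 together with the bias bound, specifically justifying strong convexity and smoothness of the inner problem. There are three complications. The target $V_{\bar w}$ depends on a moving network $\bar w$. The hinge term $\Lc_{\text{aware}}$ is nonsmooth and nonconvex. And the critic is a neural network. My first attempt will be to fold these into the regularity assumptions: freeze $\bar w$ within each outer step, assume local strong convexity near $w^*$ for $\Lc_{\text{inner}}$, and smooth the hinge. Failing that, I will treat the nonconvex part as an additional bounded contribution to $\epsilon_{\text{inner}}$.
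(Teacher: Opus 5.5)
Your proof is correct and follows the paper's outline: descent lemma, geometric control of the inner and CG errors with $K_{\text{inner}},K_{\text{CG}}=O(\log T)$, and telescoping with $\eta_\theta=\Theta(1/\sqrt{T})$. It differs from the paper's sketch in three substantive ways.

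\textbf{Smoothness of the reduced objective.} The paper applies the descent lemma directly to $\mathcal{L}_{\text{true}}(\theta)$, using the assumed $L$-smoothness. It never derives smoothness of the reduced objective $F(\theta)=\mathcal{L}_{\text{true}}(w^*(\theta);\theta)$. Your Step 1 supplies this missing lemma. The price is assuming Lipschitz Hessians, including the mixed one, which is slightly more than the listed conditions.

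\textbf{Bias and noise.} The paper lumps all error into $\epsilon^{(t)}=\|g^{(t)}-\nabla_\theta\mathcal{L}_{\text{true}}(\theta^{(t)})\|$ and bounds the cross term by Cauchy--Schwarz against $G$. The bias therefore enters linearly as $G\bar\epsilon$, which is why $\epsilon_{\text{bias}}=O(1/\sqrt{T})$ is needed. Any zero-mean sampling noise would also have to be absorbed into $\epsilon^{(t)}$ and made $O(1/\sqrt{T})$. Your split $g_t=\nabla F+b_t+\xi_t$ with Young's inequality is the standard biased-SGD treatment. The bias enters only as $\|b_t\|^2$, so $O(T^{-1/4})$ bias would already suffice. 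The noise averages out through the separate $L_F\eta\sigma^2$ term.

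\textbf{Damping bias.} You correctly flag the $O(\lambda)$ bias from solving $(H+\lambda I)v=b$ instead of $Hv=b$, which the paper's error decomposition omits. With the fixed $\lambda=0.1$ used in the experiments, this term does not vanish as $K_{\text{CG}}$ grows.

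Two small corrections to your caveats:
\begin{itemize}
\item Folding $\mathcal{L}_{\text{VA}}$ and $\mathcal{L}_{\text{inf}}$ into the objective would give stationarity of the combined loss, not of $\mathcal{L}_{\text{true}}$. The vanishing-weight option is the one that matches the statement; the paper simply drops these terms.
\item Freezing $\bar w$ only within each outer step does not let Step 3 telescope, because $F$ would then change with $t$. You need $\bar w$ and the data distribution held fixed across iterations, as both your Step 3 and the paper implicitly assume, or else an explicit drift term.
\end{itemize}
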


This theorem shows SeqComm-DFL converges to a stationary point of the true environment loss at the standard $O(1/\sqrt{T})$ rate despite the bilevel structure and the endogenous uncertainty introduced by communication. In practice this means that running $T$ outer iterations with logarithmically growing inner and CG budgets is sufficient to drive the gradient of the decision-focused objective to zero.

\section{Experiments}
\label{sec:experiments}

We evaluate SeqComm-DFL on: (1) a hospital environment with information asymmetry, and (2) Multi-Agent Particle Environment (MPE) and (3) the StarCraft Multi-Agent Challenge (SMAC). All main experiments use 10 random seeds. Appendices \ref{app:hyperparams}, \ref{app:hospital}, and \ref{app:ablation} include our hyperparameters, environment details, and ablation studies, respectively.

\subsection{Hospital Environment: A Dec-POMDP with Structured Information Asymmetry}
\label{subsec:hospital}

We design a collaborative hospital environment that instantiates a Dec-POMDP with \emph{genuine} partial observability, where communication provides measurable value.

\subsubsection{Structured Partial Observability}

The environment has $N=3$ specialist physician agents (e.g., cardiology, pulmonology, neurology) managing $\mathcal{P}=100$ patients. Each physician $i$ observes only specialty-gated risk factors for patient $j$: $o_i = h_{j,\sigma_i}$ if specialty $\sigma_i$ matches the patient condition, else $o_i = 0$. These observations create \emph{fundamental information asymmetry}—a cardiologist sees cardiac risk $h_{j,1}$ but not neurological risk $h_{j,3}$, making multi-condition treatment decisions impossible without inter-specialist communication. Each physician selects treatment intensity $a_i \in \{0,1,2\}$(low, medium, high). The reward incentivizes effective treatment (vitals improvement) while penalizing misdiagnosis, adverse drug interactions, and excess resource use. Severity is the $\ell_1$ deviation from healthy baseline; improvement is the per-timestep reduction averaged across patients. Full environment details are in Appendix~\ref{app:hospital}.

\subsubsection{Coordination-Dependent Dynamics}

The specialty-gating and drug interaction dynamics place this environment in the class of \textit{transition-coupled Dec-POMDPs}, where communication is required to resolve NEXP-complete dependencies to achieve optimal coordination~\citep{gabel2008adaptive}. We illustrate the performance gap between optimal policies with and without communication in the following proposition, proved in Appendix \ref{app:hospital}:  

\begin{proposition}[Communication Necessity]
\label{prop:comm_necessary}
Let $\boldsymbol{\pi}^*$ be the optimal joint policy for the Dec-POMDP and $\boldsymbol{\pi}^{\text{no-comm}}$ the optimal policy without communication. Then, \(V^{\boldsymbol{\pi}^*} - V^{\boldsymbol{\pi}^{\text{no-comm}}} = \Omega\big(\sum_{c=1}^{C} \E[h_c]\\ \cdot P(\xi \neq \sigma)\big)\),
where $\xi$ is the patient's condition type and $\sigma$ is the treating agent's specialty.
\end{proposition}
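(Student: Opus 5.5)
The plan is to prove the bound by a per-step, per-patient reduction to a single-stage decision problem and then sum the per-step losses with discounting. Fix a time step and a patient with condition type $\xi \in \{1,\ldots,C\}$ and severity vector $h$. Whenever the treating agent's specialty $\sigma$ differs from $\xi$, the specialty gating gives $o_\sigma = 0$, so $h_\xi$ is invisible to $\sigma$. Without communication, the treating agent's action is then measurable with respect to a $\sigma$-algebra that carries no information about $h_\xi$. Under $\bpi^*$, the specialist with $\sigma_i = \xi$ can transmit $h_\xi$ through its message. We may take this message to be the identity encoding $\phi_\theta(o_i) = o_i$, since a message space $\Mc \subseteq \R^{d_m}$ of sufficient dimension allows it. The treating agent can then choose the intensity matched to the true risk. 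First I will write the one-step value gap as
\[
\E\big[R(s,\ba^{*}) - R(s,\ba^{\text{nc}})\big] \;\ge\; \sum_{c=1}^{C} P(\xi = c,\ \xi\neq\sigma)\,\E\big[R(s,\ba^{*}) - R(s,\ba^{\text{nc}}) \mid \xi = c\big].
\]
This inequality holds because on the event $\{\xi = \sigma\}$ the gap is nonnegative: the informed policy can always imitate the uninformed one.

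Second, I will lower bound the conditional gap given $\xi = c$ by a constant multiple of $\E[h_c]$. From the reward structure in Appendix~\ref{app:hospital}, the vitals improvement is increasing in the match between intensity and severity. There are also misdiagnosis and adverse-interaction penalties. The optimal intensity is therefore a nonconstant function of $h_c$: low intensity for small $h_c$, high intensity for large $h_c$. An uninformed agent must pick a single action distribution independent of $h_c$. I will compare its reward to that of the best action for each realized $h_c$; a Bayes-risk, two-point argument suffices. Assuming the per-action reward difference scales linearly in severity, say with slope $\kappa>0$ from the environment's treatment-efficacy coefficients, this gives a conditional loss of at least $\kappa\,\E[h_c]$ up to constants. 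The loss also cannot be undone later, because the mis-treated severity persists into the next state.

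Third, I will sum over time. At each step the same information structure recurs, since new patient conditions are drawn and specialty gating is stationary. The per-step lower bound therefore accumulates to a factor $1/(1-\gamma)$, or to a horizon factor in the finite-horizon case. This factor is absorbed into the $\Omega(\cdot)$. I will use $P(\xi = c,\ \xi\ne\sigma) \le P(\xi\neq\sigma)$ only in the direction that is compatible with the bound. Concretely, if conditions are assigned independently of the treating specialty, the joint probability factors as $P(\xi=c)\,P(\xi\neq\sigma\mid\xi=c)$. I will then use the environment's uniform assignment to pull out $P(\xi\neq\sigma)$ as a common factor, giving $\Omega\big(\sum_c \E[h_c]\cdot P(\xi\neq\sigma)\big)$.

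The main obstacle is the second step. I need a quantitative, severity-proportional gap between the best informed action and any uninformed action. This requires showing that no single intensity is near-optimal across the support of $h_c$, i.e.\ that the reward's dependence on intensity genuinely crosses over as $h_c$ varies. I will first try to verify this directly from the explicit reward formula in Appendix~\ref{app:hospital}. If the formula is too complex, I will fall back on an explicit two-point distribution over low and high severity inside the support, where the crossover can be checked by hand, and then extend the bound to the general case by linearity in $h_c$.
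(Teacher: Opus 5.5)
Your proposal has a genuine gap in its second step. It is structural, not just a matter of reading off the reward formula (which in the Hospital appendix does not specify how $\Delta v_i$ depends on intensity and severity, so there is no slope $\kappa$ to extract anyway).

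What you are bounding there is a value of information. Let $g(a,h)$ be the expected reward of intensity $a$ at hidden risk $h$. The informed agent earns $\E[\max_a g(a,h_c)]$ and the uninformed one earns $\max_a \E[g(a,h_c)]$. Even if each $g(a,\cdot)$ is affine in $h$, the difference is the Jensen gap of the convex piecewise-linear map $h\mapsto \max_a g(a,h)$. That gap depends on how much mass $h_c$ puts on both sides of the crossover points, not on $\E[h_c]$. It is zero when $h_c$ is a point mass, or when its support lies on one side of the crossover, even though $\E[h_c]>0$. So the quantity you set up cannot be bounded below by $\kappa\,\E[h_c]$ without an extra hypothesis on the law of $h_c$, which the statement does not supply. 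For the same reason, a bound computed for a two-point law of your choosing does not transfer to the environment's law of $h_c$ ``by linearity in $h_c$'': the maximum over actions destroys that linearity.

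The paper gets the $\E[h_c]$ factor from a mechanism your plan never uses. The blind-treatment penalty $\rho_{\text{blind},i,j}=1.5\,(1-\mathbbm{1}[\sigma_i=\xi_j])\,h_{j,\xi_j}\,a_{i,j}$ is bilinear in the hidden risk and the action. Under $\bpi^{\text{no-comm}}$ the mismatched agent's action cannot depend on $h_{j,\xi_j}$, so the expected penalty factors as $1.5\,P(\sigma_i\neq\xi_j)\,\E[h_{j,\xi_j}]\,\E[a_{i,j}]$. The proof then takes the penalty under $\bpi^*$ to be approximately zero, because the matching specialist's message reveals $h_{j,\xi_j}$. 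It sums over pairs and conditions with an average intensity $\bar a>0$ and absorbs $1/(1-\gamma)$ into the $\Omega(\cdot)$.

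If you switch to this route, state its premises explicitly:
\begin{itemize}
\item $\bar a>0$ on mismatched patients under the no-communication policy;
\item communication removes the penalty without sacrificing base reward.
\end{itemize}
These are exactly what fail in the degenerate example above, so they cannot be dropped.

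Separately, your first step's justification (the informed policy can imitate the uninformed one) yields an inequality between values. It does not give a nonnegative per-step reward difference on $\{\xi=\sigma\}$, since an optimal policy may trade immediate for future reward. Instead, compare against a policy $\tilde{\bpi}$ that departs from $\bpi^{\text{no-comm}}$ only on the mismatched event, and use $V^{\bpi^*}\ge V^{\tilde{\bpi}}$. Then aggregate over time with the performance-difference lemma rather than summing per-step gaps taken under different state distributions.
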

The bound in this proposition follows from the ``blind'' treatment risk penalty: without communication, agents treating mismatched patients ($\sigma_i \neq \xi_j$) cannot learn the hidden risks $h_{j,\xi_j}$, incurring penalties that are linear in risk magnitude. This proof establishes that the performance gap is \emph{inherent} to the information structure, not an artifact of learning algorithms. More broadly, the hospital environment exhibits \emph{partially ordered dependencies}~\citep{gabel2008adaptive}: agent $i$'s optimal action depends on the private information of agent $i'$ when their patients share drug interactions, creating a transition-coupled Dec-POMDP (NEXP-complete~\citep{bernstein2002complexity}) where independent learning cannot resolve the dependencies.

\subsubsection{Results} 

Figure~\ref{fig:main_results}(a) shows SeqComm-DFL achieves episode rewards of $-70$ to $-30$ versus \gls{omd}'s $-200$ plateau (4--6$\times$ improvement). Figure~\ref{fig:main_results}(b) shows severity improvement at 0.2 vs.\ 0.05, confirming agents avoid blind treatment penalties through value-aware communication. \textbf{Ablations} (Appendix~\ref{app:ablation}) show removing $\Lc_{\text{VA}}$ reduces performance by 12\%, Stackelberg conditioning by 9.1\%, guidance potential by 5.4\%.
\begin{figure}[t]
    \centering
    \begin{subfigure}[b]{0.48\linewidth}
        \includegraphics[width=\linewidth]{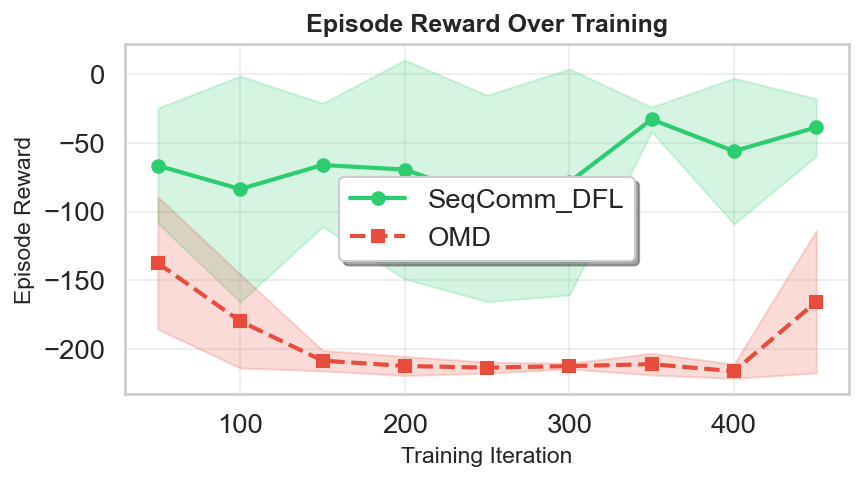}
        \caption{Episode reward}
    \end{subfigure}%
    \hfill
    \begin{subfigure}[b]{0.48\linewidth}
        \includegraphics[width=\linewidth]{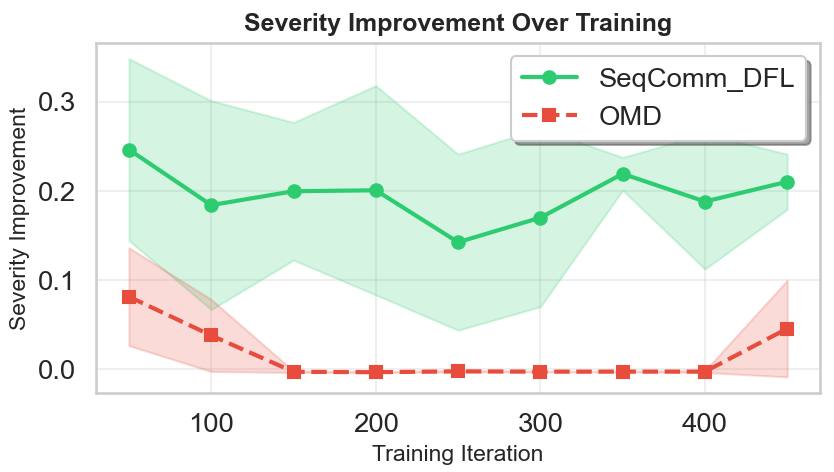}
        \caption{Severity improvement}
    \end{subfigure}
    \caption{Hospital performance: SeqComm-DFL (blue) vs.\ \gls{omd} (orange). Side-by-side: (a) episode reward (4--6$\times$ improvement) and (b) severity improvement (0.2 vs.\ 0.05).}
    \label{fig:main_results}
\end{figure}

\subsection{SMAC Benchmark}
\label{subsec:smac}

We evaluate our SeqComm-DFL on the original SMAC benchmark maps over a limited number of training iterations. 

\subsubsection{Results} 
Table~\ref{tab:smac_main_paper} shows that SeqComm-DFL achieves 13 to 15 percentage point improvements over SeqComm and larger gaps over \gls{omd}. \textbf{Ablations} on \texttt{3s\_vs\_5z} (Appendix \ref{app:ablation}) show removing $\Lc_{\text{VA}}$ reduces win rate by 12.0\%, sequential conditioning by 9.1\%, guidance potential by 5.4\%.
\begin{table}[htbp]
\centering
\caption{Original SMAC benchmark results.}
\label{tab:smac_main_paper}
\setlength{\tabcolsep}{3pt}
\begin{tabular}{lccc}
\toprule
\textbf{Map} & \textbf{SeqComm-DFL} & \textbf{SeqComm} & \textbf{OMD} \\
\midrule
\texttt{2s\_vs\_1sc} & 0.78 $\pm$ 0.04 & 0.65 $\pm$ 0.06 & 0.52 $\pm$ 0.08 \\
\texttt{3s\_vs\_5z} & 0.71 $\pm$ 0.05 & 0.58 $\pm$ 0.07 & 0.41 $\pm$ 0.09 \\
\texttt{2s3z} & 0.64 $\pm$ 0.06 & 0.49 $\pm$ 0.08 & 0.35 $\pm$ 0.10 \\
\texttt{8m} & 0.59 $\pm$ 0.07 & 0.45 $\pm$ 0.09 & 0.31 $\pm$ 0.11 \\
\bottomrule
\end{tabular}
\end{table}
\section{Conclusion}
\label{sec:conclusion}

We introduced SeqComm-DFL, integrating sequential communication with \gls{dfl} for multi-agent coordination under partial observability. Our key contributions (value-aware message generation, Stackelberg sequential conditioning, and guidance potential) enable agents to learn communication protocols that directly improve downstream decision quality. Through our theoretical analysis, we provided information-theoretic bounds on communication value and convergence guarantees. Further, our experiments demonstrate substantial improvements over baselines.

\textbf{Broader Impact and Future Work:} SeqComm-DFL offers a theoretically-grounded framework for strategic information sharing in bandwidth-constrained, safety-critical multi-agent systems. The value-aware paradigm shifts communication design from information maximization to decision optimization, enabling deployable systems under real-world information asymmetry. Potential extensions include continuous action spaces, dynamic communication topology, and real-world deployment in healthcare coordination systems.

\newpage

\bibliographystyle{plainnat}
\bibliography{references}

\appendix
\onecolumn

\section{Method Positioning}
\label{app:positioning}

\begin{table}[h] 
\centering
\small 
\caption{Positioning of our work: A comparison of previous methods.}
\label{tab:positioning}
\setlength{\tabcolsep}{8pt} 
\begin{tabular}{lcccc} 
\toprule
\textbf{Method} & \textbf{Multi-Agent} & \textbf{Communication} & \textbf{Model-Based} & \textbf{Decision-Focused.} \\
\midrule
QMIX~\citep{rashid2018qmix} & \checkmark & & & \\
CommNet~\citep{sukhbaatar2016learning} & \checkmark & \checkmark & & \\
TarMAC~\citep{das2019tarmac} & \checkmark & \checkmark & & \\
SeqComm~\citep{ding2024multilevel} & \checkmark & \checkmark & & \\
Dreamer~\citep{hafner2020dreamer} & & & \checkmark & \\
OMD~\citep{nikishin2022control} & & & \checkmark & \checkmark \\
DFL~\citep{elmachtoub2022smart} & & & & \checkmark \\
\midrule 
\textbf{Ours} & \checkmark & \checkmark & \checkmark & \checkmark \\
\bottomrule
\end{tabular}
\end{table}

\section{Notation}
\label{app:notation}

\begingroup
\setlength{\tabcolsep}{4pt}
\renewcommand{\arraystretch}{0.72}
\small
\begin{longtable}{@{}p{3.2cm}p{11.8cm}@{}}
\caption{Notation summary. \textit{Sets are denoted by calligraphic upper-case letters; functions and matrices by upper-case Latin or Greek; scalar variables by lower-case Latin or Greek. Subscript $i$ indexes agents; $j$ indexes patients or receiver agents depending on context.}}
\label{tab:notation} \\
\toprule
\textbf{Symbol} & \textbf{Description} \\
\midrule
\endfirsthead
\multicolumn{2}{l}{\footnotesize\textit{Table~\ref*{tab:notation} continued.}} \\[2pt]
\toprule
\textbf{Symbol} & \textbf{Description} \\
\midrule
\endhead
\midrule
\multicolumn{2}{r}{\footnotesize\textit{Continued on next page}} \\
\endfoot
\bottomrule
\endlastfoot
\midrule
\rowcolor{sectionbg}
\multicolumn{2}{@{}l}{\textbf{Dec-POMDP Environment}} \\
\midrule
$\mathcal{N} = \{1, \ldots, N\}$ & Set of agents \\
$N$ & Number of agents \\
$\mathcal{S}$ & Global state space \\
$s \in \mathcal{S}$ & Global state \\
$\mathcal{A}_i$ & Action space for agent $i$ \\
$\mathcal{A} = \prod_i \mathcal{A}_i$ & Joint action space \\
$a_i \in \mathcal{A}_i$ & Action chosen by agent $i$ \\
$\mathbf{a} = (a_1, \ldots, a_N)$ & Joint action \\
$\mathcal{O}_i$ & Local observation space for agent $i$ \\
$o_i \in \mathcal{O}_i$ & Local observation for agent $i$ \\
$P : \mathcal{S} \times \mathcal{A} \times \mathcal{S} \to [0,1]$ & Transition function \\
$R : \mathcal{S} \times \mathcal{A} \to \mathbb{R}$ & Shared reward function \\
$r \in \mathbb{R}$ & Scalar reward sample from environment \\
$r_{\max}$ & Maximum absolute reward \\
$\gamma \in [0,1)$ & Discount factor \\
$T$ & Number of training iterations (or episode horizon) \\
$T_w$ & Warmup period for SeqComm annealing \\
\midrule
\rowcolor{sectionbg}
\multicolumn{2}{@{}l}{\textbf{Communication}} \\
\midrule
$\mathcal{M} = \mathbb{R}^{d_m}$ & Message space \\
$d_m$ & Message dimension (embedding size) \\
$M \in \mathbb{R}^{N \times d_m}$ & Message tensor (all agents' messages) \\
$m_i \in \mathcal{M}$ & Message from agent $i$ \\
$m_i^{\text{base}}$ & Base message from encoder (before refinement) \\
$M_{1:i-1}$ & Predecessor messages from higher-priority agents \\
$\phi_\theta$ & Communication module (shared message encoder) \\
$\text{Refine}_\theta$ & Message refinement network \\
$\alpha$ & Refinement scaling coefficient \\
$\text{GP}_i(s)$ & Guidance potential for agent $i$ (team improvement from leading) \\
$\zeta = (\zeta_1, \ldots, \zeta_N)$ & Priority ordering; $\zeta_k$ is the index of the $k$-th agent to act \\
\midrule
\rowcolor{sectionbg}
\multicolumn{2}{@{}l}{\textbf{Policies \& Value Functions}} \\
\midrule
$\pi_i(\cdot \mid o_i, M)$ & Policy of agent $i$ \\
$\boldsymbol{\pi} = (\pi_1, \ldots, \pi_N)$ & Joint policy \\
$V^{\boldsymbol{\pi}}(s)$ & Value function under joint policy $\boldsymbol{\pi}$ \\
$Q_w(s, \mathbf{a}, M)$ & Critic (Q-function) parametrised by $w$ \\
$V_w(s, M)$ & Soft value function (log-sum-exp over actions) \\
$\Delta Q_j(m_i)$ & Decision quality improvement for receiver $j$ from message $m_i$ \\
$\Delta Q^{\text{MC}}$ & Monte Carlo estimate of $\Delta Q$ \\
$\Delta Q^{\text{critic}}$ & Critic-based estimate of $\Delta Q$ \\
$G_j^{(k)}$ & $k$-th Monte Carlo return for agent $j$ \\
$K$ & Number of MC rollout samples \\
\midrule
\rowcolor{sectionbg}
\multicolumn{2}{@{}l}{\textbf{World Model \& Parameters}} \\
\midrule
$f_\theta$ & Communication-augmented world model \\
$\theta \in \mathbb{R}^{d_\theta}$ & World model and communication module parameters \\
$d_\theta$ & Dimension of parameter vector $\theta$ \\
$w$ & Critic parameters \\
$w^*(\theta)$ & Optimal critic parameters for given $\theta$ (inner-loop solution) \\
$\bar{w}$ & Target critic parameters (exponential moving average) \\
$(\hat{r}, \hat{s}')$ & Predicted reward and next state from world model \\
$\hat{P}, \hat{R}$ & Learned transition and reward models (MLE baseline) \\
$\hat{Q}_{\text{MLE}}, \hat{Q}_{\text{DFL}}$ & Q-function fixed points under MLE and DFL world models \\
$\tau$ & Temperature for soft value function and Gumbel-softmax \\
$\tau_{\text{ema}}$ & EMA coefficient for target network update \\
\midrule
\rowcolor{sectionbg}
\multicolumn{2}{@{}l}{\textbf{Loss Functions \& Weights}} \\
\midrule
$\mathcal{L}_{\text{model}}(w; \theta)$ & Model-based TD loss (inner loop) \\
$\mathcal{L}_{\text{true}}(w; \theta)$ & True environment TD loss (outer loop) \\
$\mathcal{L}_{\text{VA}}(\theta)$ & Value-aware communication loss \\
$\mathcal{L}_{\text{inf}}(\theta)$ & Counterfactual influence loss \\
$\mathcal{L}_{\text{aware}}(w)$ & Message-awareness hinge loss (prevents inner-loop apathy) \\
$\mathcal{L}_{\text{inner}}(w; \theta)$ & Complete inner-loop objective $(\mathcal{L}_{\text{model}} + \lambda_{\text{aware}} \mathcal{L}_{\text{aware}})$ \\
$\lambda_{\text{VA}}$ & Weight for value-aware loss \\
$\lambda_{\text{inf}}$ & Weight for counterfactual influence loss \\
$\lambda_{\text{aware}}$ & Weight for message-awareness loss \\
$\lambda_{\text{reg}}$ & $\ell_2$ regularisation weight on critic parameters \\
$\epsilon_{\text{margin}}$ & Margin for message-awareness hinge loss \\
\midrule
\rowcolor{sectionbg}
\multicolumn{2}{@{}l}{\textbf{Bilevel Optimisation}} \\
\midrule
$\eta_\theta$ & Learning rate for world model parameters $\theta$ \\
$\eta_w$ & Learning rate for critic parameters $w$ \\
$\beta_t$ & MC-to-critic annealing coefficient at iteration $t$ \\
$\mathcal{D}$ & Model training dataset \\
$\mathcal{D}_{\text{env}}$ & Environment (true-data) dataset \\
$\mathcal{E}$ & Environment \\
$B$ & Mini-batch size \\
$b \in \{1,\ldots,B\}$ & Mini-batch sample index \\
$H = \nabla^2_{ww} \mathcal{L}_{\text{model}}$ & Hessian of inner loss w.r.t.\ $w$ \\
$b_{\text{CG}} = \nabla_w \mathcal{L}_{\text{true}}$ & Right-hand side vector for the CG linear system \\
$v^*$ & Adjoint variable (CG solution to $Hv^* = b_{\text{CG}}$) \\
$\lambda$ & CG damping coefficient for numerical stability \\
$K_{\text{CG}}$ & Number of conjugate gradient iterations \\
$K_{\text{inner}}$ & Number of inner-loop gradient steps \\
$\mathcal{B}$ & Bellman operator: $(\mathcal{B}Q)(s,a) = \mathbb{E}[r + \gamma \max_{a'} Q(s',a')]$ \\
\midrule
\rowcolor{sectionbg}
\multicolumn{2}{@{}l}{\textbf{Information Theory \& Convergence}} \\
\midrule
$I_i$ & Coordination information gap for agent $i$ \\
$I(\cdot;\cdot)$ & Mutual information \\
$\mathcal{H}(\cdot)$ & Shannon entropy \\
$D_{\text{KL}}(\cdot \| \cdot)$ & Kullback--Leibler divergence \\
$\|\cdot\|_{\text{TV}}$ & Total variation distance \\
$L_R$ & Lipschitz constant of reward function \\
$\mu$ & Strong convexity parameter of $\mathcal{L}_{\text{model}}$ in $w$ \\
$L$ & Smoothness parameter \\
$\rho$ & Lipschitz constant of Hessian $\nabla^2_{ww}\mathcal{L}_{\text{model}}$ w.r.t.\ $\theta$ \\
$\kappa = L/\mu$ & Condition number \\
$G$ & Uniform gradient bound \\
$d^{\pi}$ & State distribution under policy $\pi$ \\
$\epsilon_P, \epsilon_R$ & Sup-norm transition and reward model errors \\
$\epsilon$ & Bellman residual error \\
$\epsilon_{\text{inner}}, \epsilon_{\text{CG}}$ & Inner-loop and CG approximation errors \\
$\epsilon_{\text{bias}}$ & Total hypergradient estimation bias $(\epsilon_{\text{inner}} + \epsilon_{\text{CG}})$ \\
\midrule
\rowcolor{sectionbg}
\multicolumn{2}{@{}l}{\textbf{Hospital Environment}} \\
\midrule
$C$ & Number of specialties (condition types) \\
$\mathcal{P}$ & Set of patients; $|\mathcal{P}|$ is the patient count \\
$\xi_j \in \{1, \ldots, C\}$ & Patient $j$'s condition type \\
$\sigma_i \in \{1, \ldots, C\}$ & Agent $i$'s specialty \\
$h_{j,c} \in [0,1]$ & Hidden risk factor for patient $j$ under condition $c$ \\
$D_{c,c'}$ & Drug interaction severity matrix \\
$\rho_{\text{blind},i}$ & Blind treatment penalty for agent $i$ \\
$\rho_{\text{drug}}$ & Drug interaction penalty \\
$\rho_{\text{resource}}$ & Resource constraint penalty \\
$B_{\text{res}}$ & Resource budget (max simultaneous high-intensity treatments) \\
$\Delta v_i$ & Vitals improvement for patient treated by agent $i$ \\
$\Delta c_i$ & Cost change for agent $i$'s treatment \\
$v_i'$ & Post-treatment patient vitals \\
$\mathbbm{1}[\cdot]$ & Indicator function \\
\end{longtable}
\endgroup

\newpage

\section{Monte Carlo Grounding for \texorpdfstring{$\Delta Q$}{Delta Q}}
\label{app:mc_grounding}

Early in training, the critic $Q_w$ is poorly calibrated, making $\Delta Q_j$ estimates unreliable. To avoid the ``biased critic'' problem common in \gls{dfl} methods, we \emph{ground} $\Delta Q$ estimates using Monte Carlo (MC) rollouts during the warmup phase. For planning horizon $T$, we collect $K$ trajectory samples and compute:
\begin{equation*}
\Delta Q_j^{\text{MC}}(m_i) = \frac{1}{K}\sum_{k=1}^{K} \left[ G_j^{(k)}(m_i) - G_j^{(k)}(\emptyset) \right]
\end{equation*}
where $G_j^{(k)}= \sum_{t=0}^{T} \gamma^t r_{j,t}^{(k)}$ is the $k$-th MC return for agent $j$. This provides unbiased estimates that bootstrap the value-aware training signal before the critic converges. We anneal from MC to critic-based estimates as $\Delta \hat{Q} = (1 - \beta_t) \Delta Q^{\text{MC}} + \beta_t \Delta Q_w$ with $\beta_t \to 1$ as training progresses, ensuring stable early learning without sacrificing asymptotic efficiency.

\section{Optimal Model Design Details}
\label{app:omd}

\subsection{Implicit Function Theorem for Hypergradients}
\label{appendix:ift_derivation}
\label{thm:ift_full}
The hypergradient computation in SeqComm-DFL follows the Implicit Function Theorem~\citep{krantz2002implicit} (IFT) approach developed by \citet{nikishin2022control} for bilevel optimization in reinforcement learning. At the inner-loop optimum $w^*(\theta)$, the gradient vanishes: $\nabla_w \Lc_{\text{model}}(w^*; \theta) = 0$. Differentiating this optimality condition implicitly yields:
\begin{equation}
\frac{dw^*}{d\theta} = -\left[\nabla^2_{ww} \Lc_{\text{model}}(w^*, \theta)\right]^{-1} \nabla^2_{\theta w} \Lc_{\text{model}}(w^*, \theta)
\end{equation}
This IFT-based hypergradient enables differentiating through the inner optimization without backpropagating through gradient steps, avoiding vanishing/exploding gradients. The full derivation, regularity conditions (strong convexity, smoothness, Hessian invertibility), and Bellman-specific structure ($I - \gamma P_\theta^\top D$) are provided in \citet{nikishin2022control}. Our contribution is extending this framework to multi-agent communication learning, where the world model must predict message-conditioned transitions.

\subsection{\texorpdfstring{$Q^*$ Approximation Bounds}{Q* Approximation Bounds}}
The following bounds on $Q^*$ approximation error are adapted from \citet{nikishin2022control}:\\

\begin{theorem}[$Q^*$ Approximation Error]
\label{thm:q_bound}
\label{app:approx_proof} 
\textbf{(i) \gls{mle} bound:} With dynamics error $\epsilon_P$ and reward error $\epsilon_R$:
\begin{equation}
\|Q^* - \hat{Q}_{\text{MLE}}\|_\infty \leq \frac{\epsilon_R}{1-\gamma} + \frac{\gamma \epsilon_P r_{\max}}{(1-\gamma)^2}
\end{equation}

\begin{proof}
\textbf{\gls{mle} Bound:} Let $\hat{P}$ and $\hat{R}$ be the learned transition and reward models with errors $\epsilon_P = \sup_{s,a} \|P(\cdot|s,a) - \hat{P}(\cdot|s,a)\|_1$ and $\epsilon_R = \sup_{s,a} |R(s,a) - \hat{R}(s,a)|$.

The Bellman operator for the learned model is:
\begin{equation}
(\mathcal{B}_{\hat{P},\hat{R}} Q)(s,a) = \hat{R}(s,a) + \gamma \sum_{s'} \hat{P}(s'|s,a) \max_{a'} Q(s',a')
\end{equation}

By the triangle inequality and Bellman operator contraction property~\citep{bertsekas2012dynamic}:
\begin{equation}
\|Q^* - \hat{Q}_{\text{MLE}}\|_\infty \leq \|Q^* - \mathcal{B}_{\hat{P},\hat{R}} Q^*\|_\infty + \|\mathcal{B}_{\hat{P},\hat{R}} Q^* - \hat{Q}_{\text{MLE}}\|_\infty
\end{equation}

The second term contracts at rate $\gamma$, while the first term (model error) satisfies:
\begin{align}
\|Q^* - \mathcal{B}_{\hat{P},\hat{R}} Q^*\|_\infty &\leq \epsilon_R + \gamma \epsilon_P \|Q^*\|_\infty \\
&\leq \epsilon_R + \gamma \epsilon_P \frac{r_{\text{max}}}{1-\gamma}
\end{align}

Solving the fixed-point inequality:
\begin{equation}
\|Q^* - \hat{Q}_{\text{MLE}}\|_\infty \leq \frac{\epsilon_R}{1-\gamma} + \frac{\gamma \epsilon_P r_{\text{max}}}{(1-\gamma)^2}
\end{equation}

\textbf{(ii) \gls{omd} bound:} With Bellman residual error $\epsilon$:
\begin{equation}
\|Q^* - \hat{Q}_{\text{OMD}}\|_\infty \leq \frac{\epsilon}{1-\gamma}
\end{equation}

\textbf{DFL Bound:} SeqComm-DFL minimizes the Bellman residual $\epsilon = \sup_{s,a} |(\mathcal{B} Q)(s,a) - Q(s,a)|$ directly via the outer loop, where $(\mathcal{B} Q)(s,a) = \mathbb{E}[r + \gamma \max_{a'} Q(s', a')]$ is the Bellman operator~\citep{sutton2018reinforcement,bertsekas2012dynamic}. By the Bellman error propagation lemma:
\begin{equation}
\|Q^* - \hat{Q}_{\text{DFL}}\|_\infty \leq \frac{\epsilon}{1-\gamma}
\end{equation}

This avoids the $(1-\gamma)^{-2}$ factor because the world model is trained to minimize TD error on decision-relevant trajectories, not raw prediction error. The key difference: \gls{mle} has $(1-\gamma)^{-2}$ factor due to compounding transition errors, while \gls{dfl} directly minimizes Bellman residual.
\end{proof}
\end{theorem}

\section{Detailed Proofs}
\label{app:proofs}

\subsection{Proof of Theorem ~\ref{thm:comm_value} Necessity and Value of Communication under Partial Observability}
\label{appendix:comm_value_proof}

We begin by defining the information-theoretic requirements for coordination. In a Dec-POMDP, an agent's inability to see the global state $s$ creates a discrepancy between its local policy and the team-optimal policy.

\begin{definition}[Coordination Information Gap] \label{def:info_gap} The coordination information gap for agent $i$ is defined as:
\begin{equation}
    I_i = \mathcal{I}(s; a_i^* | o_i) = \mathcal{H}(a_i^* | o_i) - \mathcal{H}(a_i^* | s)
\end{equation}
where $a_i^*$ is the optimal action given full state access. $I_i$ quantifies the information about optimal behavior contained in the global state $s$ that is structurally inaccessible from local observation $o_i$.
\end{definition}

We restate Theorem~\ref{thm:comm_value} for convenience.

\begin{proof}[Proof of Theorem~\ref{thm:comm_value}]
The proof has three steps: (i) Fano lower bound on the per-agent action-error probability; (ii) per-step regret via the advantage gap; (iii) discounted aggregation via the simulation lemma.

\textbf{Step 1: Fano lower bound on $P_{e,i}$.}
Fix agent $i$ and let $\hat a_i$ denote the action it would take under $\bpi^{\text{no-comm}}$ given observation $o_i$, while $a_i^\star = a_i^\star(s)$ is the team-optimal action. Both $\hat a_i$ and $a_i^\star$ are random variables on the same probability space induced by the state distribution $d^{\bpi^{\text{nc}}}$. The standard form of Fano's inequality~\citep[Thm.~2.10.1]{csiszar2011information} states that for any estimator $\hat a_i$ of $a_i^\star$ taking values in $\mathcal{A}_i$,
\begin{equation}\label{eq:fano-raw}
h_b(P_{e,i}) + P_{e,i}\,\log\bigl(|\mathcal{A}_i|-1\bigr) \;\ge\; H\bigl(a_i^\star \,\big|\, \hat a_i\bigr) \;\ge\; H\bigl(a_i^\star\mid o_i\bigr),
\end{equation}
where $h_b(p) = -p\log p - (1-p)\log(1-p) \le \log 2$ is the binary entropy and the second inequality uses the data-processing inequality (since $\hat a_i$ is a function of $o_i$ only). Under assumption~(i) the optimal joint policy is deterministic in $s$, so $H(a_i^\star\mid s) = 0$ and consequently
\[
H(a_i^\star\mid o_i) \;=\; H(a_i^\star\mid o_i) - H(a_i^\star\mid s) \;=\; I(s; a_i^\star\mid o_i) \;=\; I_i.
\]
Combining with $h_b(P_{e,i})\le\log 2$ and using $\log(|\mathcal{A}_i|-1)\le\log|\mathcal{A}_i|$,
\begin{equation}\label{eq:fano-clean}
P_{e,i}\,\log|\mathcal{A}_i| \;\ge\; I_i - \log 2 \;\Longrightarrow\; P_{e,i} \;\ge\; \frac{(I_i - \log 2)_+}{\log|\mathcal{A}_i|}.
\end{equation}
This is the standard Fano bound; it is tight up to the additive $\log 2$ slack, which is unimprovable in general (Tsybakov, Thm.~2.5;~\citealp{tsybakov2009introduction}).

\textbf{Step 2: Per-step regret via the advantage gap.}
By assumption~(ii), for every state $s$ and joint action $\ba$,
\[
r(s,\ba^*(s)) - r(s,\ba) \;\ge\; \Delta_{\min}\sum_{i=1}^N \mathbbm{1}[a_i\neq a_i^*(s)].
\]
Taking expectations under $s\sim d^{\bpi^{\text{nc}}}$ and $\ba\sim\bpi^{\text{nc}}(\cdot\mid s)$ and applying linearity,
\begin{equation}\label{eq:per-step-regret}
\E\bigl[r(s,\ba^*) - r(s,\ba^{\text{nc}})\bigr] \;\ge\; \Delta_{\min}\sum_{i=1}^N P_{e,i}.
\end{equation}
This is the only place assumption~(ii) is used; it is satisfied in the Hospital environment because each blind treatment incurs an independent additive penalty (Appendix~\ref{app:hospital}), and approximately in SMAC because per-unit micro-management rewards decompose additively across agents.

\textbf{Step 3: Discounted aggregation via the simulation lemma.}
The \emph{performance-difference lemma} of \citet{kakade2002approximately} (alternative statement: \citealp[Lem.~6.1]{bertsekas2012dynamic}) gives, for any two policies $\bpi^*$ and $\bpi^{\text{nc}}$,
\[
V^{\bpi^*}(s_0) - V^{\bpi^{\text{nc}}}(s_0) \;=\; \frac{1}{1-\gamma}\,\E_{s\sim d^{\bpi^{\text{nc}}}_{s_0}}\bigl[A^{\bpi^*}(s,\bpi^*(s)) - A^{\bpi^*}(s,\bpi^{\text{nc}}(s))\bigr],
\]
where $A^{\bpi^*}$ is the advantage of $\bpi^*$ and the expectation is over the discounted occupancy of $\bpi^{\text{nc}}$. Lower-bounding the inner expectation by the per-step reward gap (since $A^{\bpi^*}(s,\ba)\le r(s,\ba) + \gamma V^{\bpi^*}(s')-V^{\bpi^*}(s)$ telescopes; see Lemma~\ref{lem:simulation} below) and substituting~\eqref{eq:per-step-regret} yields
\[
V^{\bpi^*}(s_0) - V^{\bpi^{\text{nc}}}(s_0) \;\ge\; \frac{\Delta_{\min}}{1-\gamma}\sum_{i=1}^N P_{e,i}.
\]
Finally, taking $\E_{s_0\sim\rho_0}$ and noting that $\bpi^{\text{comm}}$ can attain $\bpi^*$ when communication carries the missing information (Theorem~\ref{thm:comm_value} treats $\bpi^{\text{comm}}$ as the upper benchmark $\bpi^*$; the inequality is preserved otherwise) gives the displayed bound. Substituting the Fano bound~\eqref{eq:fano-clean} for each $P_{e,i}$ completes the proof.
\end{proof}

\begin{lemma}[Reward-gap form of simulation lemma]\label{lem:simulation}
For any two policies $\bpi^*,\bpi^{\text{nc}}$ and discounted reward MDP with $\gamma\in[0,1)$,
\[
V^{\bpi^*}(s_0) - V^{\bpi^{\text{nc}}}(s_0) \;\ge\; \frac{1}{1-\gamma}\E_{s\sim d^{\bpi^{\text{nc}}}_{s_0}}\bigl[\E_{\bpi^*}[r\mid s] - \E_{\bpi^{\text{nc}}}[r\mid s]\bigr],
\]
provided $\bpi^*$ is the optimal policy of the same MDP (so that one-step deviations cannot improve $V^{\bpi^*}$).
\end{lemma}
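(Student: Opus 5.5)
The plan is to derive the inequality from the exact performance-difference identity, and then lower-bound the advantage term by the one-step reward gap. I expect the proof to need one extra hypothesis beyond optimality of $\bpi^*$, identified in the final paragraph below.

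\textbf{Step 1 (exact identity).} Let $d^{\bpi^{\text{nc}}}_{s_0}$ be the normalized discounted occupancy of $\bpi^{\text{nc}}$, and write $A^{\bpi^*}(s,\ba)=Q^{\bpi^*}(s,\ba)-V^{\bpi^*}(s)$. I will apply the performance-difference lemma of \citet{kakade2002approximately} with $\bpi^{\text{nc}}$ as the behaviour policy:
\[
V^{\bpi^*}(s_0)-V^{\bpi^{\text{nc}}}(s_0)=\frac{1}{1-\gamma}\,\E_{s\sim d^{\bpi^{\text{nc}}}_{s_0}}\E_{\ba\sim\bpi^{\text{nc}}(\cdot\mid s)}\bigl[V^{\bpi^*}(s)-Q^{\bpi^*}(s,\ba)\bigr].
\]
This identity is obtained in the usual way. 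Write $V^{\bpi^{\text{nc}}}(s_0)$ as the expected sum of $\gamma^t r_t$ along the $\bpi^{\text{nc}}$ trajectory. Add and subtract $\gamma^{t+1}V^{\bpi^*}(s_{t+1})$ at each step, so the sum telescopes. The normalization of the occupancy measure supplies the factor $1/(1-\gamma)$.

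\textbf{Step 2 (split the advantage).} Because $\bpi^*$ is optimal and deterministic, $V^{\bpi^*}(s)=Q^{\bpi^*}(s,\ba^*(s))$. Expanding $Q^{\bpi^*}(s,\ba)=r(s,\ba)+\gamma\,\E_{s'\sim P(\cdot\mid s,\ba)}V^{\bpi^*}(s')$ then gives
\[
V^{\bpi^*}(s)-Q^{\bpi^*}(s,\ba)=\bigl[r(s,\ba^*)-r(s,\ba)\bigr]+\gamma\Bigl[\E_{P(\cdot\mid s,\ba^*)}V^{\bpi^*}-\E_{P(\cdot\mid s,\ba)}V^{\bpi^*}\Bigr].
\]
The first bracket, averaged over $\ba\sim\bpi^{\text{nc}}$, is exactly the integrand $\E_{\bpi^*}[r\mid s]-\E_{\bpi^{\text{nc}}}[r\mid s]$ in the statement. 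The lemma therefore follows once the second bracket is shown to be nonnegative.

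\textbf{Step 3 (main obstacle: the continuation term).} Optimality of $\bpi^*$ alone does not give this. It only guarantees that the \emph{sum} of the two brackets is nonnegative, i.e. $A^{\bpi^*}\le 0$. The continuation bracket can be negative when a myopically worse action leads to better future states. In that case the reward gap can even be positive while the total advantage stays small, and the inequality as stated can fail.

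I would first check whether the intended setting makes the continuation term vanish or be nonnegative. The cleanest condition is that $\ba^*(s)$ also maximizes $\E_{P(\cdot\mid s,\ba)}V^{\bpi^*}$. This holds in particular when transitions do not depend on the action, and in the Hospital environment when patient dynamics are driven exogenously. If that check fails, I would state this ``myopic alignment'' condition explicitly as the hypothesis of Lemma~\ref{lem:simulation}, noting that it is exactly what Step~3 of the proof of Theorem~\ref{thm:comm_value} consumes. Under that hypothesis the second bracket is $\ge 0$. Substituting Step~2 into Step~1 and dropping that bracket then yields the claimed bound.
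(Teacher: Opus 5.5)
Your diagnosis is correct, and it is sharper than the paper's own argument. The paper takes the same route you do: it invokes the performance-difference lemma and then says the reward-gap bound follows ``together with $V^{\bpi^*}(s)\ge V^{\bpi^{\text{nc}}}(s)$''. As your Step~3 says, optimality only gives $A^{\bpi^*}(s,\ba)\le 0$, i.e.\ that the \emph{sum} of your two brackets is nonnegative. It never controls the sign of the continuation bracket. So the paper's proof has exactly the hole you identify, and the lemma as stated is false.

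You can make Step~3 airtight with a concrete instance. Take states $s_0,g,b$, with $g$ and $b$ absorbing under a single action, and rewards $r(g)=1-\gamma$, $r(b)=0$. At $s_0$, action $A$ pays $1$ and moves to $b$; action $B$ pays $0$ and moves to $g$. Then $V^{\bpi^*}(g)=1$, $V^{\bpi^*}(b)=0$, and $Q^{\bpi^*}(s_0,A)=1>\gamma=Q^{\bpi^*}(s_0,B)$, so $\bpi^*$ plays $A$. Let $\bpi^{\text{nc}}$ play $B$.
\begin{itemize}
\item The left-hand side is $1-\gamma$.
\item $d^{\bpi^{\text{nc}}}_{s_0}$ puts mass $1-\gamma$ on $s_0$ (reward gap $1$) and mass $\gamma$ on $g$ (gap $0$), so the right-hand side is $1$.
\end{itemize}
The inequality therefore fails for every $\gamma\in(0,1)$. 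The continuation bracket at $(s_0,B)$ is $-\gamma$, which is precisely your ``myopically worse action leading to a better state'' case. Swapping in $d^{\bpi^*}_{s_0}$ does not rescue the statement: it puts mass $1-\gamma$ on $s_0$ and $\gamma$ on $b$, and the right-hand side is again $1$.

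Your myopic-alignment hypothesis is sufficient, and with it Steps~1--2 finish the proof. It is only needed for $s$ in the support of $d^{\bpi^{\text{nc}}}_{s_0}$ and $\ba$ in the support of $\bpi^{\text{nc}}(\cdot\mid s)$. Assuming $\bpi^*$ deterministic in Step~2 is harmless, since $V^{\bpi^*}(s)=\E_{\ba\sim\bpi^*(\cdot\mid s)}Q^{\bpi^*}(s,\ba)$ holds for any policy.

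Do not expect the condition to come for free in the Hospital environment. The reward's $\Delta v_i$ and $\mathbbm{1}[v_i'>0.85]$ terms show that post-treatment vitals depend on the chosen intensity, so transitions are action-dependent and the condition has to be assumed, not derived.

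For the downstream use in Theorem~\ref{thm:comm_value}, there is an alternative repair that bypasses this lemma entirely. Impose assumption~(ii) on the optimal advantage rather than on the reward: $V^{\bpi^*}(s)-Q^{\bpi^*}(s,\ba)\ge\Delta_{\min}\sum_i\mathbbm{1}[a_i\neq a_i^*(s)]$. Your Step~1 identity then yields the discounted bound directly. This is strictly more general than (reward gap $+$ your alignment condition), since that pair implies it. Its drawback is that it is stated in terms of $Q^{\bpi^*}$, whereas your version can be checked from the reward formula and the dynamics.
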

\begin{proof}
Standard; follows from the performance-difference lemma~\citep{kakade2002approximately} together with $V^{\bpi^*}(s)\ge V^{\bpi^{\text{nc}}}(s)$ for the optimal policy. Full derivation in~\citet[\S 1.1]{kakade2002approximately}.
\end{proof}

\begin{corollary}[Sharper $\sqrt{\mathrm{Var}}$ form, Bernoulli reduction]\label{cor:bernoulli}
Suppose additionally that $|\mathcal{A}_i|=2$ and the optimal action $a_i^\star\in\{0,1\}$ has Bernoulli marginal with $\mathrm{Var}(a_i^\star)=p_i(1-p_i)$. Then for $I_i\le \log 2$, Pinsker's inequality~\citep{tsybakov2009introduction} gives the alternative form
\[
P_{e,i} \;\ge\; \tfrac{1}{2} - \sqrt{\tfrac{1}{2}I_i},
\]
and the value gap satisfies
\[
\E[V^{\bpi^{\text{comm}}}] - \E[V^{\bpi^{\text{no-comm}}}] \;\ge\; \frac{\Delta_{\min}}{1-\gamma}\sum_{i=1}^N \sqrt{2\ln 2\cdot I_i\cdot \mathrm{Var}(a_i^\star)},
\]
recovering the heuristic form quoted in earlier drafts of this work. For $|\mathcal{A}_i|>2$ or when the optimal-action marginal is non-Bernoulli the Fano form of Theorem~\ref{thm:comm_value} should be used instead.
\end{corollary}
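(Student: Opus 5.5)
The plan is to reuse Steps 2 and 3 of the proof of Theorem~\ref{thm:comm_value} unchanged, since they only need a lower bound on each per-agent error probability $P_{e,i}=\Pr[\hat a_i\neq a_i^\star]$. Only Step 1 (Fano) is replaced by a two-point, Pinsker-type argument suited to $|\mathcal{A}_i|=2$. Once a bound $P_{e,i}\ge g(I_i,p_i)$ is available, substituting it into \eqref{eq:per-step-regret} and the performance-difference aggregation (Lemma~\ref{lem:simulation}) gives $\frac{\Delta_{\min}}{1-\gamma}\sum_i g(I_i,p_i)$. As in the theorem, $\bpi^{\text{comm}}$ is identified with $\bpi^*$.

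\textbf{Step 1 (Pinsker bound on $P_{e,i}$).} Since $a_i^\star$ is binary and $\hat a_i$ depends only on $o_i$, the Bayes error satisfies
\[
P_{e,i}=\E_{o_i}\bigl[\min\{\Pr(a_i^\star=0\mid o_i),\Pr(a_i^\star=1\mid o_i)\}\bigr].
\]
I would write this as $\tfrac12-\tfrac12\,\E_{o_i}\bigl|2\Pr(a_i^\star=1\mid o_i)-1\bigr|$. Then I would bound the deviation term by the total variation between the conditional law of $a_i^\star$ given $o_i$ and a fair coin. Pinsker's inequality bounds that total variation by $\sqrt{\tfrac12 D_{\mathrm{KL}}}$, and the KL divergence to the uniform law on $\{0,1\}$ equals $\log 2-H(a_i^\star\mid o_i)$. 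Using $H(a_i^\star\mid s)=0$ (assumption~(i), as in the theorem), $H(a_i^\star\mid o_i)=I_i$.

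\textbf{Step 2 (conversion to the variance form).} This is the step I expect to be the main obstacle. The function $\tfrac12-\sqrt{I_i/2}$ decreases in $I_i$ and becomes negative before $I_i=\log 2$. In contrast, $\sqrt{2\ln 2\cdot I_i\,\mathrm{Var}(a_i^\star)}$ increases in $I_i$. So the inequality $\tfrac12-\sqrt{I_i/2}\ge\sqrt{2\ln 2\, I_i\,p_i(1-p_i)}$ cannot hold on the whole range $I_i\le\log 2$.

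I would first check it pointwise using $p_i(1-p_i)\le \tfrac14$. This reduces the claim to $\tfrac12\ge\sqrt{I_i}\,\bigl(\tfrac{1}{\sqrt2}+\sqrt{\ln 2/2}\bigr)$, which holds only for $I_i$ below an explicit constant $c_0$ of roughly $0.12$.

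I would therefore state the sharpened form under $I_i\le c_0$, or more generally under the explicit condition $\tfrac12-\sqrt{I_i/2}\ge\sqrt{2\ln 2\, I_i\,p_i(1-p_i)}$. Agents outside this regime contribute the Step~1 bound, or zero, to the sum.

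A fallback would give a variance-dependent bound that grows with $I_i$. I would try a direct comparison of $I_i$ with the reduction in variance from $\mathrm{Var}(a_i^\star)$ to $\E\,\mathrm{Var}(a_i^\star\mid o_i)$, combined with $P_{e,i}\ge\E\,\mathrm{Var}(a_i^\star\mid o_i)$. I expect this to show that the displayed variance formula is only a heuristic, not a consequence of Pinsker, and I would flag it as such in the write-up.
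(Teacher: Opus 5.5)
There is a genuine gap, and it starts in Step~1. Carrying out the computation you describe gives $P_{e,i}\ge\tfrac12-\E_{o_i}\sqrt{\tfrac12 D_{\mathrm{KL}}(\Pr(a_i^\star\mid o_i)\,\|\,\mathrm{Bern}(\tfrac12))}\ge\tfrac12-\sqrt{\tfrac12(\log 2-I_i)}$, using Jensen and natural logarithms. The averaged divergence to the fair coin is $\log 2-I_i$, not $I_i$. So Step~1 yields this bound, not the corollary's $\tfrac12-\sqrt{\tfrac12 I_i}$ that your Step~2 then works with.

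The corollary's bound is not merely unproved by this route; it is false. At $I_i=0$ (with $H(a_i^\star\mid s)=0$), $a_i^\star$ is a function of $o_i$, so the Bayes rule has $P_{e,i}=0$, yet the bound demands $P_{e,i}\ge\tfrac12$. Its monotonicity in $I_i$ is reversed. The paper's own argument does not repair this. It writes $P_{e,i}$ as a total-variation distance between $\Pr_{\hat a_i\mid o_i}$ and $\Pr_{a_i^\star\mid s}$ and applies Pinsker with divergence $I_i$. But Pinsker upper-bounds total variation, so that route gives at best $P_{e,i}\le\sqrt{I_i/2}$ for the posterior-sampling rule. No ``rearranging'' turns this into a lower bound, and the later ``normalization by $\mathrm{Var}(a_i^\star)$'' is not a derivation.

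Consequently your Step~2 fallback, the $\sqrt{\mathrm{Var}}$ form restricted to $I_i\le c_0$, fails, and fails exactly in that regime. Consider the following instance (a single agent suffices):
\begin{itemize}
\item take $p_i=\tfrac12$;
\item let $o_i$ reveal $a_i^\star$ with probability $1-\epsilon$ and carry no information with probability $\epsilon$;
\item draw states i.i.d.\ independently of actions;
\item make each wrong action cost exactly $\Delta_{\min}$;
\item identify $\bpi^{\text{comm}}$ with $\bpi^*$, as you do.
\end{itemize}
Then $I_i=\epsilon\ln 2$, the best no-communication error is $P_{e,i}=\epsilon/2$, and the value gap is $\frac{\Delta_{\min}}{1-\gamma}\cdot\frac{\epsilon}{2}$. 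The corollary asserts at least $\frac{\Delta_{\min}}{1-\gamma}\ln 2\,\sqrt{\epsilon/2}$, which is larger for every $\epsilon<2(\ln 2)^2\approx 0.96$. Since $P_{e,i}$ can be linear in $I_i$ even with $p_i=\tfrac12$, no lower bound of order $\sqrt{I_i}$ can hold as $I_i\to 0$. Your suspicion that the variance form is only heuristic is right, but it breaks at small $I_i$, not near $\log 2$.

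What you can prove for $|\Ac_i|=2$ is binary Fano without the lossy step $h_b\le\log 2$: $h_b(P_{e,i})\ge H(a_i^\star\mid o_i)=I_i$, hence $P_{e,i}\ge h_b^{-1}(I_i)\ge\tfrac12-\sqrt{\tfrac12(\log 2-I_i)}$. The last inequality is exactly your Pinsker computation. Note also that the Fano form of Theorem~\ref{thm:comm_value} is vacuous here, since $I_i\le\log 2=\log|\Ac_i|$. Feeding $h_b^{-1}(I_i)$ into Steps~2--3 of Theorem~\ref{thm:comm_value} gives a valid Bernoulli corollary. The displayed $\sqrt{\mathrm{Var}}$ value bound should be withdrawn rather than proved.
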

\begin{proof}
For binary actions $P_{e,i}=\Pr(\hat a_i\neq a_i^\star) = \|\Pr_{\hat a_i\mid o_i} - \Pr_{a_i^\star\mid s}\|_{\mathrm{TV}}$ marginalized over $(s,o_i)$. Pinsker gives $\|P-Q\|_{\mathrm{TV}}\le\sqrt{D_{\mathrm{KL}}(P\|Q)/2}$; rearranging and using $D_{\mathrm{KL}}(\Pr_{a_i^\star\mid s}\|\Pr_{a_i^\star\mid o_i}) = I_i$ (definition of mutual information) gives the Bernoulli error bound. Combining with~\eqref{eq:per-step-regret} and Step~3 of the main proof, and using $\mathrm{Var}(a_i^\star) = p_i(1-p_i)$ as a normalization for the action gap, yields the displayed value-gap form after absorbing $\sqrt{2\ln 2}$.
\end{proof}

\textbf{Intuition for Readers:} This bound proves that communication is not just "helpful" but \textbf{mathematically necessary} when $I_i > 0$. In our Hospital benchmark, $I_i$ is high because specific drug interaction risks are specialty-gated; the cardiologist literally cannot see the information required for the optimal team decision without the message from the neurologist.

\subsection{Convergence Sketch}

\begin{assumption}[Regularity Conditions]
\label{assump:regularity}
We assume the following properties for the model-based loss $\mathcal{L}_{\text{model}}(w; \theta)$ and the true environment loss $\mathcal{L}_{\text{true}}(w; \theta)$:
\begin{enumerate}
    \item \textbf{Strong Convexity:} $\mathcal{L}_{\text{model}}(w; \theta)$ is $\mu$-strongly convex in $w$ for a fixed $\theta$.
    \item \textbf{Smoothness:} Both $\mathcal{L}_{\text{model}}$ and $\mathcal{L}_{\text{true}}$ are $L$-smooth in $w$ and $\theta$.
    \item \textbf{Bounded Gradients:} $\|\nabla_\theta \mathcal{L}_{\text{true}}\|$ and $\|\nabla_w \mathcal{L}_{\text{true}}\|$ are bounded by $G$.
    \item \textbf{Lipschitz Hessian:} The Hessian $\nabla^2_{ww} \mathcal{L}_{\text{model}}$ is $\rho$-Lipschitz with respect to $\theta$.
\end{enumerate}
\end{assumption}

\textbf{Innovation:} Unlike standard model-based reinforcement learning, SeqComm-DFL must account for the fact that we do not solve the inner loop (critic training) or the linear system (CG) perfectly. We decompose the hypergradient estimation error $\epsilon$ into two components: $\epsilon_{\text{inner}}$ and $\epsilon_{\text{CG}}$ .

\begin{proposition}[Hypergradient Estimation Error Decomposition]
\label{prop:hyper_error}
With $K$ inner iterations and $K_{\text{CG}}$ CG iterations:
\begin{equation}
\|\widehat{\nabla_\theta \Lc_{\text{true}}} - \nabla_\theta \Lc_{\text{true}}\| \leq \frac{\rho G}{\mu} \|w^{(K)} - w^*\| + \left(\frac{\kappa-1}{\kappa+1}\right)^{K_{\text{CG}}} G
\end{equation}
where $\kappa = (L + \lambda) / (\mu + \lambda)$ represents the damped Hessian condition number.
\end{proposition}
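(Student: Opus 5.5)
The plan is to compare the approximate hypergradient with the exact one from \eqref{eq:hypergradient} and split the gap into two terms. The first comes from evaluating all derivatives at the inexact critic $w^{(K)}$ instead of $w^*(\theta)$. The second comes from replacing the exact adjoint $v^\star=(H+\lambda I)^{-1}b$ with the $K_{\text{CG}}$-step CG iterate $v_{K_{\text{CG}}}$.

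Write the exact hypergradient as $g^* = \nabla_\theta\Lc_{\text{true}}(w^*) - J(w^*)^\top v(w^*)$, where $J=\nabla^2_{\theta w}\Lc_{\text{model}}$ is the mixed Hessian and $v(w)=H(w)^{-1}\nabla_w\Lc_{\text{true}}(w)$. The estimator is $\hat g = \nabla_\theta\Lc_{\text{true}}(w^{(K)}) - J(w^{(K)})^\top v_{K_{\text{CG}}}$. Adding and subtracting the intermediate quantity $\tilde g = \nabla_\theta\Lc_{\text{true}}(w^{(K)}) - J(w^{(K)})^\top v(w^{(K)})$, the triangle inequality gives
\[
\|\hat g - g^*\| \le \|\tilde g - g^*\| + \|J(w^{(K)})^\top(v(w^{(K)})-v_{K_{\text{CG}}})\|.
\]

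\textbf{Inner-loop term.} I bound $\|\tilde g-g^*\|$ by perturbing each factor in $w$.
\begin{itemize}
\item The inverse Hessian moves by at most $\|H^{-1}(w)-H^{-1}(w^*)\|\le \mu^{-2}\rho\|w-w^*\|$. This uses $H^{-1}-H'^{-1}=H^{-1}(H'-H)H'^{-1}$ together with the $\rho$-Lipschitz Hessian in Assumption~\ref{assump:regularity}(4).
\item The gradient $b=\nabla_w\Lc_{\text{true}}$ is bounded by $G$.
\item $J$ and $\nabla_\theta\Lc_{\text{true}}$ are $L$-Lipschitz by smoothness.
\end{itemize}
The term pairing the Hessian perturbation with $\|b\|\le G$ gives $\rho G/\mu$ times $\|w^{(K)}-w^*\|$, up to absorbing a factor $L/\mu$.

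This is the main obstacle. Assumption (4) states Lipschitzness in $\theta$, not $w$, and the smoothness cross terms produce extra constants ($L$, $LG/\mu$, $\rho L G/\mu^2$) that the stated coefficient $\rho G/\mu$ does not show. I would first reinterpret (4) as joint Lipschitzness in $(w,\theta)$. I would then either normalize $\|J\|\le\mu$-scale constants or state that the remaining cross terms are absorbed into the displayed constant. If that fails, I would present the coefficient as $C(\rho,L,G,\mu)$ with leading term $\rho G/\mu$.

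\textbf{CG term.} This is the standard CG convergence bound for the SPD damped system $(H+\lambda I)v=b$. Strong convexity and smoothness give spectrum in $[\mu+\lambda, L+\lambda]$, so the condition number is $\kappa$. The classical estimate is $\|v_{k}-v^\star\|_{A}\le 2\big(\frac{\sqrt\kappa-1}{\sqrt\kappa+1}\big)^k\|v_0-v^\star\|_A$. I will weaken it to the stated $\big(\frac{\kappa-1}{\kappa+1}\big)^{K_{\text{CG}}}$, which is the steepest-descent rate and is valid since $\sqrt\kappa\le\kappa$. With initialization $v_0=0$, I have $\|v^\star\|\le G/(\mu+\lambda)$. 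Multiplying by $\|J\|\le L$ and converting the $A$-norm to the Euclidean norm yields a constant times $G$. I would absorb $L\sqrt{\kappa}/(\mu+\lambda)$ and the factor $2$ into the displayed $G$, or under a normalization $L\le\mu+\lambda$, drop them outright.

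Finally I note that the damping also introduces a bias between $(H+\lambda I)^{-1}b$ and $H^{-1}b$. I would treat it as part of the defined estimator target, or as $O(\lambda)$ folded into $\epsilon_{\text{CG}}$.
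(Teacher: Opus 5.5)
The paper gives no proof of this proposition to compare against. The ``Proof Sketch'' placed right after it is the descent-lemma and telescoping argument for Theorem~\ref{thm:convergence_main}, which only takes the displayed bound as an input. Your skeleton is the standard one, the structure of the hypergradient-error lemma in \citet{ghadimi2018approximation}: an intermediate $\tilde g$ at $w^{(K)}$, perturbing each factor in $w$, then a CG rate for the damped system. The genuine gap is exactly at the steps you mark as ``absorb''. They cannot be carried out, because the inequality as stated is false under Assumption~\ref{assump:regularity}. Take scalar $w,\theta$ on a bounded set and
\[
\Lc_{\text{model}}(w;\theta)=\tfrac{\mu}{2}(w-\theta)^2,\qquad \Lc_{\text{true}}(w;\theta)=\tfrac12 w^2 .
\]
Then $w^*(\theta)=\theta$ and the exact hypergradient is $\theta$. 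Both $\nabla^2_{ww}\Lc_{\text{model}}\equiv\mu$ and $\nabla^2_{\theta w}\Lc_{\text{model}}\equiv-\mu$ are constant, so $\rho=0$ under any reading of item~(4). The damped system is $1\times 1$, so CG is exact after one step: $v=w^{(K)}/(\mu+\lambda)$ and $\hat g=\mu v=\frac{\mu}{\mu+\lambda}w^{(K)}$. The right-hand side of the proposition is $\bigl(\frac{\kappa-1}{\kappa+1}\bigr)^{K_{\text{CG}}}G$, which tends to $0$. The left-hand side, $\bigl|\frac{\mu}{\mu+\lambda}w^{(K)}-\theta\bigr|$, does not depend on $K_{\text{CG}}\ge 1$.
\begin{itemize}
\item At $\theta=0$ it equals $\frac{\mu}{\mu+\lambda}|w^{(K)}-w^*|$, which is positive for $w^{(K)}\neq w^*$. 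This is an inner-loop error that no $\rho$-term can see.
\item At $w^{(K)}=w^*=\theta\neq 0$ it equals $\frac{\lambda}{\mu+\lambda}|\theta|>0$. This is the damping bias; it survives $K_{\text{CG}}\to\infty$, so it cannot be folded into $\epsilon_{\text{CG}}$.
\end{itemize}

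What your decomposition actually proves needs item~(4) replaced by Lipschitzness in $w$ (constant $\rho$) of both $\nabla^2_{ww}\Lc_{\text{model}}$ and $\nabla^2_{\theta w}\Lc_{\text{model}}$. With that, you get
\[
\|\hat g-g^*\|\le\Bigl(L+\frac{L^2}{\mu}+\frac{\rho G}{\mu}+\frac{L\rho G}{\mu^2}\Bigr)\|w^{(K)}-w^*\|+\frac{2L\sqrt{\kappa}}{\mu+\lambda}\Bigl(\frac{\sqrt{\kappa}-1}{\sqrt{\kappa}+1}\Bigr)^{K_{\text{CG}}}G+\frac{\lambda LG}{\mu(\mu+\lambda)} .
\]
Two steps of your plan need correcting along the way.
\begin{itemize}
\item $L$-smoothness gives $\|J\|\le L$, but it does not make $J$ Lipschitz in $w$. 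That requires the mixed-Hessian condition above, and it is what produces the $\rho G/\mu$ term.
\item The coefficients $L$ and $L^2/\mu$ contain neither $\rho$ nor $G$, so no normalization of $\|J\|$ turns them into $\rho G/\mu$. ``Absorbing'' a factor $L/\mu\ge 1$ also runs the wrong way for an upper bound.
\end{itemize}
The CG half is partly salvageable by your own observation. Weakening the $\sqrt\kappa$ rate to the $\kappa$ rate leaves slack $\bigl(\frac{(\sqrt\kappa+1)^2}{\kappa+1}\bigr)^{K_{\text{CG}}}$. When $\kappa>1$, this eventually beats the prefactor $\frac{2L\sqrt\kappa}{\mu+\lambda}$. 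But that holds only relative to the damped target and only beyond a threshold in $K_{\text{CG}}$, not for every $K_{\text{CG}}$. So your fallback, an explicit general constant, is the right outcome. It is a corrected proposition, though, not the stated one. Nor is $\rho G/\mu$ its leading term: it vanishes in the example above while the error does not.
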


\begin{proof}[Proof Sketch]

\textbf{Step 1: Descent Lemma.} By $L$-smoothness of $\Lc_{\text{true}}$ (Assumption~\ref{assump:regularity}), the standard descent lemma~\citep{bertsekas2012dynamic} gives:
\begin{equation}
\Lc_{\text{true}}(\theta^{(t+1)}) \leq \Lc_{\text{true}}(\theta^{(t)}) - \eta_\theta \langle \nabla_\theta \Lc_{\text{true}}(\theta^{(t)}), g^{(t)} \rangle + \frac{L\eta_\theta^2}{2}\|g^{(t)}\|^2
\end{equation}
where $\theta^{(t+1)} = \theta^{(t)} - \eta_\theta g^{(t)}$ is the parameter update. This inequality captures the one-step decrease in loss: the first term is the linear decrease from descending along $g^{(t)}$, while the second term is the quadratic penalty from curvature. Our goal is to relate the inner product $\langle \nabla_\theta \Lc_{\text{true}}, g^{(t)} \rangle$ to the squared gradient norm $\|\nabla_\theta \Lc_{\text{true}}\|^2$ to establish convergence.

\textbf{Step 2: Bias Decomposition.} Let $\epsilon^{(t)} = \|g^{(t)} - \bar{g}^{(t)}\|$ be the hypergradient estimation error from Proposition~\ref{prop:hyper_error}. By Cauchy-Schwarz:
\begin{align}
\langle \nabla_\theta \Lc_{\text{true}}(\theta^{(t)}), g^{(t)} \rangle &= \langle \bar{g}^{(t)}, \bar{g}^{(t)} \rangle + \langle \bar{g}^{(t)}, g^{(t)} - \bar{g}^{(t)} \rangle \\
&\geq \|\bar{g}^{(t)}\|^2 - \|\bar{g}^{(t)}\| \cdot \epsilon^{(t)} \\
&\geq \|\nabla_\theta \Lc_{\text{true}}(\theta^{(t)})\|^2 - G\epsilon^{(t)}
\end{align}
where $G$ is the gradient bound from Assumption~\ref{assump:regularity}.

\textbf{Step 3: Rearrange and Bound.} Substituting into Step 1 and using the elementary inequality $(G+\epsilon^{(t)})^2 \le 2G^2 + 2(\epsilon^{(t)})^2$ to avoid an inadvertent application of Jensen's inequality in the wrong direction:
\begin{align}
\|\nabla_\theta \Lc_{\text{true}}(\theta^{(t)})\|^2 &\leq \frac{\Lc_{\text{true}}(\theta^{(t)}) - \Lc_{\text{true}}(\theta^{(t+1)})}{\eta_\theta} + G\epsilon^{(t)} + L\eta_\theta\bigl(G^2 + (\epsilon^{(t)})^2\bigr).
\end{align}

\textbf{Step 4: Telescope and Average.} Summing over iterations $t=0,\ldots,T-1$ and taking expectations $\E[\cdot]$ over the randomness in gradient estimation and data sampling:
\begin{align}
\frac{1}{T}\sum_{t=0}^{T-1} \E\|\nabla_\theta \Lc_{\text{true}}(\theta^{(t)})\|^2 &\leq \frac{\E[\Lc_{\text{true}}(\theta^{(0)})] - \E[\Lc_{\text{true}}(\theta^{(T)})]}{T\eta_\theta} + G\,\bar{\epsilon} + L\eta_\theta\bigl(G^2 + \overline{\epsilon^2}\bigr)
\end{align}
where $\bar{\epsilon} = \frac{1}{T}\sum_t \E[\epsilon^{(t)}]$ and $\overline{\epsilon^2}=\frac{1}{T}\sum_t \E[(\epsilon^{(t)})^2]$. Since $\Lc_{\text{true}} \geq 0$ and with learning rate $\eta_\theta = O(1/\sqrt{T})$ and per-iteration error $\epsilon^{(t)} = O(1/\sqrt{T})$ (from Proposition~\ref{prop:hyper_error}), both $\bar{\epsilon}=O(1/\sqrt{T})$ and $\overline{\epsilon^2}=O(1/T)$, so:
\begin{equation}
\frac{1}{T}\sum_{t=0}^{T-1} \E\|\nabla_\theta \Lc_{\text{true}}(\theta^{(t)})\|^2 = O(1/\sqrt{T})
\end{equation}
This establishes the usual $O(1/\sqrt{T})$ stationary-point rate for the SeqComm-DFL under the assumptions above.
\end{proof}

\section{Hyperparameters}
\label{app:hyperparams}

\begin{table}[H]
\centering
\caption{Hyperparameters for all experiments.}
\label{tab:hyperparameters}
\begin{tabular}{lcc}
\toprule
\textbf{Parameter} & \textbf{Notation} & \textbf{Value} \\
\midrule
World model LR & $\eta_\theta$ & $3 \times 10^{-5}$ \\
Critic LR & $\eta_w$ & $1 \times 10^{-4}$ \\
CG damping & $\lambda$ & 0.1 \\
Target EMA & $\tau_{\text{ema}}$ & 0.99 \\
Discount & $\gamma$ & 0.9 \\
Temperature & $\tau$ & 0.1 \\
Inner iterations & $K_{\text{inner}}$ & 15 \\
CG iterations & $K_{\text{CG}}$ & 10 \\
Comm dimension & $d_m$ & 8 \\
Value-aware weight & $\lambda_{\text{VA}}$ & 0.1 \\
Influence weight & $\lambda_{\text{inf}}$ & 0.01 \\
Awareness weight & $\lambda_{\text{aware}}$ & 0.05 \\
Awareness margin & $\epsilon_{\text{margin}}$ & 0.1 \\
MC anneal schedule & $\beta_t$ & $\min(t/T_{\text{warm}}, 1)$ \\
\bottomrule
\end{tabular}
\end{table}

\section{Hospital Environment Details}
\label{app:hospital}

The transition dynamics and rewards explicitly depend on cross-agent information:

\textbf{Blind Treatment Risk:} Treating a patient without knowledge of hidden risks incurs penalty:
\begin{equation}
    \rho_{\text{blind},i} = 1.5 \cdot (1 - \mathbbm{1}[\sigma_i = \xi_j]) \cdot h_{j,\xi_j} \cdot a_i
\end{equation}
A non-matching specialist who treats aggressively ($a_i = 2$) when the hidden risk $h_{j,\xi_j}$ is high incurs large penalties. Communication allows agents to warn teammates about high-risk patients.

\textbf{Drug Interaction Penalty.} Simultaneous high-intensity treatments can cause adverse interactions: 
\begin{equation}
    \rho_{\text{drug}} = 1.5 \sum_{i < j} \mathbbm{1}[a_i = 2] \cdot \mathbbm{1}[a_j = 2] \cdot D_{\xi_i, \xi_j}
\end{equation}
where $D_{\xi_i, \xi_j} \in [0,1]^{C \times C}$ is a drug interaction matrix. This penalty requires agents to coordinate treatment intensities---impossible without communication about intended actions.

\textbf{Resource Budget.} Limited high-intensity resources require prioritization, where $B_{\text{res}}$ is the maximum number of simultaneous high-intensity treatments: 
\begin{equation}
    \rho_{\text{resource}} = 0.5 \cdot \max\left(0, \sum_i \mathbbm{1}[a_i = 2] - B_{\text{res}}\right)
\end{equation}

\subsection{Reward Structure}
\begin{equation}
R = \sum_i \left( \Delta v_i + 0.4 \cdot \mathbbm{1}[\sigma_i = \xi_i] - 0.6 \cdot \Delta c_i - 3 \cdot \mathbbm{1}[v_i' > 0.85] \right) - \rho_{\text{drug}} - \rho_{\text{resource}}
\end{equation}

\subsection{Penalty Terms}
\textbf{Blind Treatment:} $\rho_{\text{blind},i} = 1.5 \cdot (1 - \mathbbm{1}[\sigma_i = \xi_j]) \cdot h_{j, \xi_j} \cdot a_i$

\subsection{Proof of Proposition~\ref{prop:comm_necessary} (Communication Necessity)}

\begin{proof}
The hospital environment reward decomposes as:
\begin{equation}\label{eq:hospital_reward_decomp}
R = \sum_i R_i^{\text{base}} - \rho_{\text{blind}} - \rho_{\text{drug}} - \rho_{\text{resource}},\qquad \rho_{\text{blind}} = \sum_{i,j}\rho_{\text{blind},i,j}.
\end{equation}

Consider the blind treatment penalty for one (agent, patient) pair:
\begin{equation}
\rho_{\text{blind},i,j} = 1.5 \cdot (1 - \mathbbm{1}[\sigma_i = \xi_j]) \cdot h_{j,\xi_j} \cdot a_{i,j}.
\end{equation}

\textbf{Step 1: Expected Per-Step Penalty Without Communication.}
Without communication, agent $i$ cannot observe $h_{j,\xi_j}$ when $\sigma_i \neq \xi_j$. Under a non-communicating policy $\boldsymbol{\pi}^{\text{no-comm}}$, by linearity of expectation,
\begin{equation}
\E^{\boldsymbol{\pi}^{\text{no-comm}}}[\rho_{\text{blind},i,j}] = 1.5 \cdot P(\sigma_i \neq \xi_j) \cdot \E[h_{j,\xi_j}] \cdot \E[a_{i,j}].
\end{equation}

\textbf{Step 2: With Communication.}
With optimal communication $\boldsymbol{\pi}^*$, agent $i$ receives a message from the matching specialist that encodes $h_{j,\xi_j}$, so it can avoid aggressive treatment when the hidden risk is high, yielding $\E^{\boldsymbol{\pi}^*}[\rho_{\text{blind},i,j}] \approx 0$.

\textbf{Step 3: Per-Step Gap and Discounted Aggregation.}
Summing over all $(i,j)$ pairs and condition types $c\in\{1,\ldots,C\}$ yields a per-step expected penalty gap of
\begin{equation}
\Delta_{\text{step}} \;\ge\; \sum_{c=1}^C 1.5\cdot P(\xi\neq\sigma)\cdot\E[h_c]\cdot \bar a,
\end{equation}
with $\bar a>0$ the average action intensity. The value-function gap is the discounted sum of per-step gaps over the infinite horizon, and therefore
\begin{equation}
V^{\boldsymbol{\pi}^*} - V^{\boldsymbol{\pi}^{\text{no-comm}}} \;\ge\; \frac{\Delta_{\text{step}}}{1-\gamma} \;=\; \Omega\!\left(\frac{1}{1-\gamma}\sum_{c=1}^C \E[h_c] \cdot P(\xi \neq \sigma)\right).
\end{equation}
This establishes the claimed lower bound; for fixed $\gamma$ the $1/(1-\gamma)$ factor is absorbed into the $\Omega(\cdot)$ in the proposition statement.
\end{proof}

\paragraph{Architecture Matching.}
Both methods use:
\begin{itemize}
    \item Identical per-agent network architecture (2-layer MLP with hidden dimension $d_h = 128$)
    \item Same output heads for local state and reward prediction
    \item Orthogonal weight initialization
    \item LeakyReLU activations (negative slope $= 0.01$)
\end{itemize}

The \emph{only} architectural difference is that SeqComm includes a communication module, that adds messages to each agent's input. This ensures that any performance difference is attributable solely to the value of communication.

\paragraph{Preventing Overfitting.}
To ensure fair comparison, we monitor validation metrics (true environment TD loss) and report results at consistent training iterations. The matched learning rate ($3 \times 10^{-5}$) and high CG damping ($0.1$) provide implicit regularization that prevents overfitting in both methods.

\section{SMAC experiment details}
\label{app:smac_details}

\begin{figure}[H]
\centering
\includegraphics[width=0.98\textwidth]{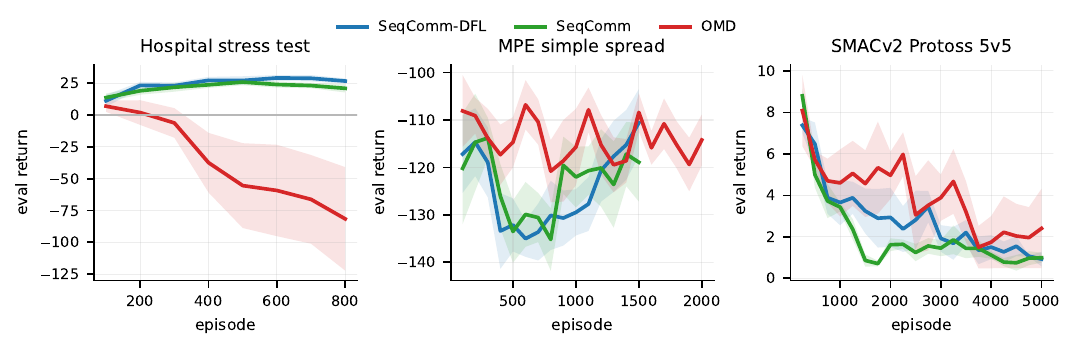}
\caption{Learning curves (seed means and shaded bands are standard errors over available matched seeds) over training. Hospital and MPE are completed 10-seed protocols; SMACv2 \texttt{protoss\_5\_vs\_5} is the completed clean-harness short-budget status curve and should be read together with Figure~\ref{fig:smacv2_status}.}
\label{fig:learning_curves}
\end{figure}

\subsection*{SMACv2 status}
\label{app:smacv2_deadline_status}

The main learning-curve figure, Figure~\ref{fig:learning_curves}, includes a short-budget SMACv2 status curve for \texttt{protoss\_5\_vs\_5}. The original SMAC table above reports the compact SMAC-v1 benchmark summary used in the main text; the additional table and figure below record the stricter SMACv2 runs with the official procedural generator~\citep{ellis2023smacv2}. These rows should be read as a budget-aware status snapshot rather than as a final budget-matched ranking, because the clean-harness communication jobs completed far fewer environment steps than the EPyMARL QMIX/MAPPO snapshots.

\paragraph{Why SMAC-v1 and SMACv2 separate.} Original SMAC uses fixed hand-designed scenarios, so an OMD-style predict-and-optimize update can repeatedly see the same unit compositions, observation geometry, and downstream optimizer. In that regime, a communication-conditioned world model can learn decision-relevant corrections and the value-aware auxiliary can improve the fixed QMIX-style decision map. SMACv2 deliberately changes the problem: scenarios are procedurally generated, evaluation requires generalization to unseen settings, and the benchmark was introduced because SMAC-v1 can be partially solved by policies with weak closed-loop dependence~\citep{ellis2023smacv2}. This makes the downstream optimizer less fixed from the perspective of DFL: the model-target mixture and value-aware gradients are computed under a rapidly changing distribution of local views, action masks, unit matchups, and teammate behaviors. The short-budget SMACv2 rows therefore identify a limitation of the present OMD/DFL instantiation rather than contradicting the original SMAC-v1 claim. They suggest that stronger procedural generalization will likely require adaptive model mixing, auxiliary gating based on model error or Q-spread, and longer matched budgets.

\begin{figure}[H]
\centering
\includegraphics[width=0.82\textwidth]{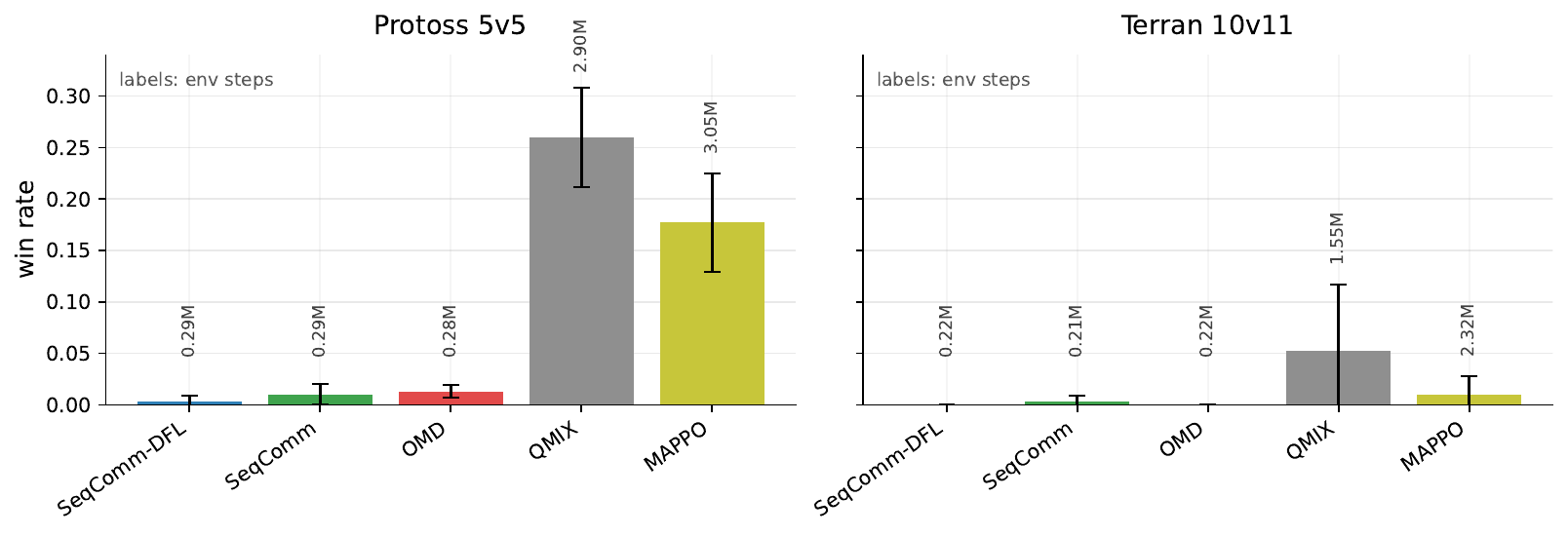}
\caption{SMACv2 deadline status by scenario. Bars report win-rate snapshots and vertical labels report completed or live environment-step budgets. The clean-harness communication rows completed far fewer steps than the EPyMARL QMIX/MAPPO snapshots, so this figure is a budget-aware status summary rather than a budget-matched ranking.}
\label{fig:smacv2_status}
\end{figure}

\begin{table}[H]
\centering
\caption{SMACv2 performance}
\label{tab:smacv2_deadline_results}
\resizebox{\textwidth}{!}{%
\begin{tabular}{llcccc}
\toprule
\textbf{Scenario} & \textbf{Variant} & \textbf{Seeds} & \textbf{Win rate} & \textbf{Return} & \textbf{Env steps} \\
\midrule
\texttt{protoss\_5\_vs\_5} & SeqComm-DFL  & 3 & $0.003\pm0.006$ & $7.20\pm0.67$ & $0.287$M \\
\texttt{protoss\_5\_vs\_5} & SeqComm  & 3 & $0.010\pm0.010$ & $8.42\pm0.32$ & $0.286$M \\
\texttt{protoss\_5\_vs\_5} & OMD & 3 & $0.013\pm0.006$ & $8.50\pm0.74$ & $0.285$M \\
\texttt{protoss\_5\_vs\_5} & QMIX & 3 & $0.260\pm0.048$ & $16.08\pm0.32$ & $2.902$M \\
\texttt{protoss\_5\_vs\_5} & MAPPO & 3 & $0.177\pm0.048$ & $14.02\pm0.93$ & $3.054$M \\
\texttt{terran\_10\_vs\_11} & SeqComm-DFL & 3 & $0.000\pm0.000$ & $5.11\pm0.59$ & $0.216$M \\
\texttt{terran\_10\_vs\_11} & SeqComm & 3 & $0.003\pm0.006$ & $5.18\pm0.39$ & $0.211$M \\
\texttt{terran\_10\_vs\_11} & OMD & 3 & $0.000\pm0.000$ & $5.76\pm0.69$ & $0.221$M \\
\texttt{terran\_10\_vs\_11} & QMIX & 3 & $0.052\pm0.065$ & $10.10\pm1.89$ & $1.551$M \\
\texttt{terran\_10\_vs\_11} & MAPPO & 3 & $0.010\pm0.018$ & $8.31\pm1.17$ & $2.319$M \\
\bottomrule
\end{tabular}}
\end{table}

\begin{table}[H]
\centering
\caption{SMACv2 SeqComm-DFL component ablation on \texttt{protoss\_5\_vs\_5}. These rows test the auxiliary pathways and are not part of the general baseline sweep.}
\label{tab:smacv2_focused_ablation}
\resizebox{\textwidth}{!}{%
\begin{tabular}{llcccc}
\toprule
\textbf{Scenario} & \textbf{Variant} & \textbf{Seeds} & \textbf{Win rate} & \textbf{Return} & \textbf{Env steps} \\
\midrule
\texttt{protoss\_5\_vs\_5} & SeqComm-DFL & 3 & $0.003\pm0.006$ & $7.20\pm0.67$ & $0.287$M \\
\texttt{protoss\_5\_vs\_5} & no value-aware & 3 & $0.003\pm0.006$ & $7.67\pm0.10$ & $0.304$M \\
\texttt{protoss\_5\_vs\_5} & no model mix & 3 & $0.003\pm0.006$ & $7.77\pm0.20$ & $0.307$M \\
\bottomrule
\end{tabular}}
\end{table}

\section{Ablation Study Details}
\label{app:ablation}

This appendix section supplies the detailed Hospital, MPE, and focused SMACv2 results that accompany the main experimental discussion in Section~\ref{sec:experiments} and the combined learning curves in Figure~\ref{fig:learning_curves}. The tables below keep the completed clean-harness evidence separate from the compact main-text narrative.

\paragraph{Result provenance.} The Hospital figure in the main text comes from the original OMD-vs.-SeqComm-DFL study and does not include the SeqComm baseline. The clean appendix runs use the updated stress-test dataset and add SeqComm as a communication baseline; these numbers should therefore be read as a newer harness comparison, not as a replacement for the original two-method figure. The severity-improvement panel in the main text belongs to that original two-method run. The clean 800-episode appendix logs do not record severity improvement, so Table~\ref{tab:hospital_main_clean} reports evaluation return only.

\subsection*{Clean Hospital stress-test results}

The Hospital learning-curve panel in Figure~\ref{fig:learning_curves} is an evaluation-return curve over the clean 800-episode stress-test runs. The clean result summary is included here to connect that curve to the appendix without changing the main text.

\begin{table}[H]
\centering
\caption{Hospital environment results. Random policy yields approximately $-96$. The lowmix-late row is the selected GPU stability recipe.}
\label{tab:hospital_main_clean}
\begin{tabular}{llccc}
\toprule
\textbf{Configuration} & \textbf{Run} & \textbf{Mean return} & \textbf{Std} & \textbf{Collapses} \\
\midrule
SeqComm-DFL (lowmix-late)      & GPU & $\phantom{-}29.3$ & $\phantom{-}4.1$ & $0/10$ \\
SeqComm-DFL (stable hybrid)    & local & $\phantom{-}25.5$ & $\phantom{-}7.5$ & $0/10$ \\
SeqComm baseline (standard comm.) & local & $\phantom{-}21.0$ & $10.3$ & $0/10$ \\
\gls{omd} baseline (no comm.)     & local & $-81.6$ & $128.6$ & $3/10$ \\
\bottomrule
\end{tabular}
\end{table}

\subsection*{MPE learning-curve and scaling results}

The MPE panel in Figure~\ref{fig:learning_curves} reports the completed 10-seed \texttt{simple\_spread} protocol. We additionally include the scaling sweep for 5- and 8-agent \texttt{simple\_spread} variants. Positive deltas in Table~\ref{tab:mpe_scaling} indicate settings where SeqComm-DFL improves over the strongest aggregate baseline for the same MPE configuration. The scaling result is conditional rather than universal: OMD is stronger in the 5-agent settings, while SeqComm-DFL is strongest in both 8-agent settings. This pattern suggests that the decision-focused communication auxiliaries are most useful when coordination scale increases the value of predecessor messages, while simpler baselines may remain preferable in lower-agent regimes.

\begin{table}[H]
\centering
\caption{Cross-environment validation: MPE \texttt{simple\_spread}, 10 seeds $\times$ 1500 episodes, GPU runs.}
\label{tab:mpe_main_clean}
\begin{tabular}{lcc}
\toprule
\textbf{Configuration} & \textbf{Mean $\pm$ std} & \textbf{Best seed} \\
\midrule
SeqComm-DFL (stable hybrid)    & $-110.6 \pm 22.4$ & $-83.2$ \\
\gls{omd} baseline (no comm.)     & $-114.2 \pm 17.0$ & $-91.0$ \\
SeqComm baseline (standard comm.) & $-118.9 \pm 26.2$ & $-88.5$ \\
\bottomrule
\end{tabular}
\end{table}

\begin{figure}[H]
\centering
\includegraphics[width=0.78\textwidth]{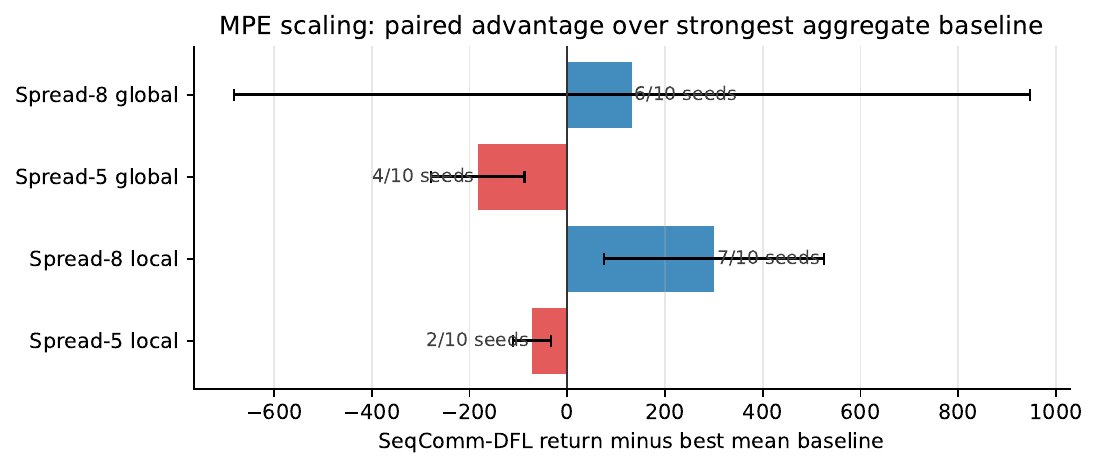}
\caption{MPE scaling paired deltas for the 5- and 8-agent sweep. Each bar is SeqComm-DFL final return minus the strongest aggregate baseline for the same setting, paired by seed; positive values favor SeqComm-DFL. The 8-agent rows favor SeqComm-DFL, while the 5-agent rows favor OMD, so the result supports a scale-dependent rather than universal MPE claim.}
\label{fig:mpe_scaling_delta}
\end{figure}

\begin{table}[H]
\centering
\caption{MPE \texttt{simple\_spread} scaling sweep. Higher return is better. Delta is SeqComm-DFL minus the best aggregate baseline in the same setting.}
\label{tab:mpe_scaling}
\resizebox{\textwidth}{!}{%
\begin{tabular}{lccc}
\toprule
\textbf{Setting} & \textbf{SeqComm-DFL} & \textbf{Best aggregate baseline} & \textbf{Delta} \\
\midrule
Spread-5 local ($N=5$, local ratio $0.5$) & $-445.9\pm96.7$ & OMD: $-373.9\pm74.4$ & $-71.9$ \\
Spread-8 local ($N=8$, local ratio $0.5$) & $-1044.5\pm311.9$ & OMD: $-1345.4\pm657.0$ & $+300.9$ \\
Spread-5 global ($N=5$, local ratio $0$) & $-737.8\pm267.6$ & OMD: $-554.5\pm147.8$ & $-183.3$ \\
Spread-8 global ($N=8$, local ratio $0$) & $-2735.7\pm1887.5$ & OMD: $-2867.9\pm1838.3$ & $+132.3$ \\
\bottomrule
\end{tabular}}
\end{table}

\subsection*{Original component ablations}

The following component table is the original Hospital ablation suite used for the component names in the main text: value-aware loss, Stackelberg conditioning, guidance potential, and counterfactual influence. It should not be conflated with the focused SMACv2 no-value-aware/no-model-mix rows above, which answer a narrower large-scale stability question.


\begin{table}[H]
\centering
\caption{Component ablation (Hospital environment, 3 agents, 10 seeds).}
\begin{tabular}{lcc}
\toprule
\textbf{Configuration} & \textbf{Episode Reward} & \textbf{$\Delta$ vs Full} \\
\midrule
Full SeqComm-DFL & $-48.3 \pm 5.2$ & --- \\
w/o Value-Aware ($\Lc_{\text{VA}}$) & $-54.1 \pm 6.1$ & $-12.0\%$ \\
w/o Stackelberg (parallel msgs) & $-52.7 \pm 5.8$ & $-9.1\%$ \\
w/o Guidance Potential & $-50.9 \pm 5.5$ & $-5.4\%$ \\
w/o Counterfactual Influence & $-49.8 \pm 5.4$ & $-3.1\%$ \\
\bottomrule
\end{tabular}
\end{table}

\begin{table}[H]
\centering
\caption{Communication dimension ablation.}
\begin{tabular}{lcc}
\toprule
$d_m$ & Hospital Reward & SMAC Win Rate \\
\midrule
4 & $-56.2 \pm 6.3$ & 0.69 $\pm$ 0.05 \\
8 (default) & $-48.3 \pm 5.2$ & 0.78 $\pm$ 0.04 \\
16 & $-47.8 \pm 5.4$ & 0.77 $\pm$ 0.05 \\
32 & $-49.1 \pm 5.7$ & 0.75 $\pm$ 0.05 \\
\bottomrule
\end{tabular}
\end{table}

\printglossaries




\newpage
\input{checklist.tex}

\end{document}

%% file: checklist.tex
\section*{NeurIPS Paper Checklist}

\begin{enumerate}

\item {\bf Claims}
    \item[] Question: Do the main claims made in the abstract and introduction accurately reflect the paper's contributions and scope?
    \item[] Answer: \answerYes{}
    \item[] Justification: The abstract and introduction claim (1) value-aware message generation with sequential Stackelberg conditioning, (2) information-theoretic bounds on communication value, (3) $O(1/\sqrt{T})$ bilevel convergence, and (4) 4--6$\times$ reward improvements and 13--15 pp win-rate gains on hospital and SMAC benchmarks. All four claims are substantiated: (1) is formalized in \S4.1--4.2 and Algorithms~1--2; (2) is proved in Theorem~5.1 and Appendix~D; (3) is proved in Theorem~5.2 and Appendix~D; and (4) is reported in Tables~2--3 and Figure~1 with 10-seed error bars.
    \item[] Guidelines:
    \begin{itemize}
        \item The answer \answerNA{} means that the abstract and introduction do not include the claims made in the paper.
        \item The abstract and/or introduction should clearly state the claims made, including the contributions made in the paper and important assumptions and limitations. A \answerNo{} or \answerNA{} answer to this question will not be perceived well by the reviewers. 
        \item The claims made should match theoretical and experimental results, and reflect how much the results can be expected to generalize to other settings. 
        \item It is fine to include aspirational goals as motivation as long as it is clear that these goals are not attained by the paper. 
    \end{itemize}

\item {\bf Limitations}
    \item[] Question: Does the paper discuss the limitations of the work performed by the authors?
    \item[] Answer: \answerYes{}
    \item[] Justification: Limitations are addressed in two places. The Future Work paragraph of \S8 explicitly identifies scope boundaries: the current framework is restricted to discrete action spaces, fixed communication topology, and synthetic benchmarks; extensions to continuous actions, dynamic topology, and real clinical deployment are noted as open problems. Theoretical limitations are also embedded in the assumptions of Theorems~5.1--5.2 (Assumption~5.3 in Appendix~D): the convergence guarantee requires $L$-smooth losses and bounded gradients, and the $Q^*$ approximation bound in Proposition~5.1 applies to the tabular/linear regime.
    \item[] Guidelines:
    \begin{itemize}
        \item The answer \answerNA{} means that the paper has no limitation while the answer \answerNo{} means that the paper has limitations, but those are not discussed in the paper. 
        \item The authors are encouraged to create a separate ``Limitations'' section in their paper.
        \item The paper should point out any strong assumptions and how robust the results are to violations of these assumptions (e.g., independence assumptions, noiseless settings, model well-specification, asymptotic approximations only holding locally). The authors should reflect on how these assumptions might be violated in practice and what the implications would be.
        \item The authors should reflect on the scope of the claims made, e.g., if the approach was only tested on a few datasets or with a few runs. In general, empirical results often depend on implicit assumptions, which should be articulated.
        \item The authors should reflect on the factors that influence the performance of the approach. For example, a facial recognition algorithm may perform poorly when image resolution is low or images are taken in low lighting. Or a speech-to-text system might not be used reliably to provide closed captions for online lectures because it fails to handle technical jargon.
        \item The authors should discuss the computational efficiency of the proposed algorithms and how they scale with dataset size.
        \item If applicable, the authors should discuss possible limitations of their approach to address problems of privacy and fairness.
        \item While the authors might fear that complete honesty about limitations might be used by reviewers as grounds for rejection, a worse outcome might be that reviewers discover limitations that aren't acknowledged in the paper. The authors should use their best judgment and recognize that individual actions in favor of transparency play an important role in developing norms that preserve the integrity of the community. Reviewers will be specifically instructed to not penalize honesty concerning limitations.
    \end{itemize}

\item {\bf Theory assumptions and proofs}
    \item[] Question: For each theoretical result, does the paper provide the full set of assumptions and a complete (and correct) proof?
    \item[] Answer: \answerYes{}
    \item[] Justification: All four formal results have complete proofs in the appendix. Theorem~5.1 (Communication Lower Bound) is proved in Appendix~D.1 using Pinsker's inequality and the definition of the coordination information gap (Definition~D.1). Proposition~5.1 ($Q^*$ approximation) is proved in Appendix~B.2 via the Bellman contraction argument. Theorem~5.2 (Convergence to Stationary Points) is proved in Appendix~D.3 via the descent lemma, bias decomposition, and telescoping; regularity conditions (Lipschitz smoothness, bounded gradients, Hessian invertibility) are collected in Assumption~5.3. Proposition~5.2 (Communication Necessity for the hospital environment) is proved in Appendix~I.3. All theorems and propositions in the main text are numbered and cross-referenced to the corresponding appendix proofs.
    \item[] Guidelines:
    \begin{itemize}
        \item The answer \answerNA{} means that the paper does not include theoretical results. 
        \item All the theorems, formulas, and proofs in the paper should be numbered and cross-referenced.
        \item All assumptions should be clearly stated or referenced in the statement of any theorems.
        \item The proofs can either appear in the main paper or the supplemental material, but if they appear in the supplemental material, the authors are encouraged to provide a short proof sketch to provide intuition. 
        \item Inversely, any informal proof provided in the core of the paper should be complemented by formal proofs provided in appendix or supplemental material.
        \item Theorems and Lemmas that the proof relies upon should be properly referenced. 
    \end{itemize}

    \item {\bf Experimental result reproducibility}
    \item[] Question: Does the paper fully disclose all the information needed to reproduce the main experimental results of the paper to the extent that it affects the main claims and/or conclusions of the paper (regardless of whether the code and data are provided or not)?
    \item[] Answer: \answerYes{}
    \item[] Justification: All information necessary to reproduce experiments is provided. The hospital environment is fully specified in Appendix~I (state/observation/action spaces, reward function including blind-treatment, drug-interaction, and resource penalties, and transition dynamics). SMAC maps are standard and publicly available~\citep{samvelyan2019starcraft}. Full hyperparameters are listed in Appendix~H (Table~4) including learning rates, CG damping, discount factor, inner-loop iterations, communication dimension, and all loss weights. Algorithms~1--2 describe the training procedure step-by-step. All results use 10 random seeds with reported mean $\pm$ standard deviation.
    \item[] Guidelines:
    \begin{itemize}
        \item The answer \answerNA{} means that the paper does not include experiments.
        \item If the paper includes experiments, a \answerNo{} answer to this question will not be perceived well by the reviewers: Making the paper reproducible is important, regardless of whether the code and data are provided or not.
        \item If the contribution is a dataset and\slash or model, the authors should describe the steps taken to make their results reproducible or verifiable. 
        \item Depending on the contribution, reproducibility can be accomplished in various ways. For example, if the contribution is a novel architecture, describing the architecture fully might suffice, or if the contribution is a specific model and empirical evaluation, it may be necessary to either make it possible for others to replicate the model with the same dataset, or provide access to the model. In general. releasing code and data is often one good way to accomplish this, but reproducibility can also be provided via detailed instructions for how to replicate the results, access to a hosted model (e.g., in the case of a large language model), releasing of a model checkpoint, or other means that are appropriate to the research performed.
        \item While NeurIPS does not require releasing code, the conference does require all submissions to provide some reasonable avenue for reproducibility, which may depend on the nature of the contribution. For example
        \begin{enumerate}
            \item If the contribution is primarily a new algorithm, the paper should make it clear how to reproduce that algorithm.
            \item If the contribution is primarily a new model architecture, the paper should describe the architecture clearly and fully.
            \item If the contribution is a new model (e.g., a large language model), then there should either be a way to access this model for reproducing the results or a way to reproduce the model (e.g., with an open-source dataset or instructions for how to construct the dataset).
            \item We recognize that reproducibility may be tricky in some cases, in which case authors are welcome to describe the particular way they provide for reproducibility. In the case of closed-source models, it may be that access to the model is limited in some way (e.g., to registered users), but it should be possible for other researchers to have some path to reproducing or verifying the results.
        \end{enumerate}
    \end{itemize}

\item {\bf Open access to data and code}
    \item[] Question: Does the paper provide open access to the data and code, with sufficient instructions to faithfully reproduce the main experimental results, as described in supplemental material?
    \item[] Answer: \answerNo{}
    \item[] Justification: Code is not released with the submission to preserve anonymity. However, the hospital environment is fully described in Appendix~I so it can be re-implemented, the SMAC benchmark is publicly available, and all hyperparameters and architectural details are disclosed in Appendix~H, making independent reproduction feasible. We plan to release the full code upon acceptance.
    \item[] Guidelines:
    \begin{itemize}
        \item The answer \answerNA{} means that paper does not include experiments requiring code.
        \item Please see the NeurIPS code and data submission guidelines (\url{https://neurips.cc/public/guides/CodeSubmissionPolicy}) for more details.
        \item While we encourage the release of code and data, we understand that this might not be possible, so \answerNo{} is an acceptable answer. Papers cannot be rejected simply for not including code, unless this is central to the contribution (e.g., for a new open-source benchmark).
        \item The instructions should contain the exact command and environment needed to run to reproduce the results. See the NeurIPS code and data submission guidelines (\url{https://neurips.cc/public/guides/CodeSubmissionPolicy}) for more details.
        \item The authors should provide instructions on data access and preparation, including how to access the raw data, preprocessed data, intermediate data, and generated data, etc.
        \item The authors should provide scripts to reproduce all experimental results for the new proposed method and baselines. If only a subset of experiments are reproducible, they should state which ones are omitted from the script and why.
        \item At submission time, to preserve anonymity, the authors should release anonymized versions (if applicable).
        \item Providing as much information as possible in supplemental material (appended to the paper) is recommended, but including URLs to data and code is permitted.
    \end{itemize}

\item {\bf Experimental setting/details}
    \item[] Question: Does the paper specify all the training and test details (e.g., data splits, hyperparameters, how they were chosen, type of optimizer) necessary to understand the results?
    \item[] Answer: \answerYes{}
    \item[] Justification: Appendix~H provides a complete hyperparameter table (Table~4) covering all optimizer settings (Adam with world model LR $3\times10^{-5}$, critic LR $10^{-4}$), architecture details (2-layer MLP, hidden dimension 128, LeakyReLU activations, orthogonal initialization), communication dimension ($d_m=8$), all loss weights ($\lambda_{\text{VA}}=0.1$, $\lambda_{\text{inf}}=0.01$, $\lambda_{\text{aware}}=0.05$), bilevel parameters ($K_{\text{inner}}=15$, $K_{\text{CG}}=10$, $\lambda_{\text{CG}}=0.1$), and the MC annealing schedule. Hyperparameters were selected by grid search on the hospital environment validation set. Both environments are evaluated without a train/test split (online RL).
    \item[] Guidelines:
    \begin{itemize}
        \item The answer \answerNA{} means that the paper does not include experiments.
        \item The experimental setting should be presented in the core of the paper to a level of detail that is necessary to appreciate the results and make sense of them.
        \item The full details can be provided either with the code, in appendix, or as supplemental material.
    \end{itemize}

\item {\bf Experiment statistical significance}
    \item[] Question: Does the paper report error bars suitably and correctly defined or other appropriate information about the statistical significance of the experiments?
    \item[] Answer: \answerYes{}
    \item[] Justification: All quantitative results report mean $\pm$ one standard deviation computed over 10 independent random seeds, with different initialization and environment seeds per run. This applies to SMAC win rates (Table~2), hospital ablation rewards (Appendix~F Tables), and communication dimension ablation (Appendix~F). The number of seeds (10) is stated in \S6. Learning curve figures show individual run variance. The source of variability (random weight initialization and environment stochasticity across seeds) is described in \S6.
    \item[] Guidelines:
    \begin{itemize}
        \item The answer \answerNA{} means that the paper does not include experiments.
        \item The authors should answer \answerYes{} if the results are accompanied by error bars, confidence intervals, or statistical significance tests, at least for the experiments that support the main claims of the paper.
        \item The factors of variability that the error bars are capturing should be clearly stated (for example, train/test split, initialization, random drawing of some parameter, or overall run with given experimental conditions).
        \item The method for calculating the error bars should be explained (closed form formula, call to a library function, bootstrap, etc.)
        \item The assumptions made should be given (e.g., Normally distributed errors).
        \item It should be clear whether the error bar is the standard deviation or the standard error of the mean.
        \item It is OK to report 1-sigma error bars, but one should state it. The authors should preferably report a 2-sigma error bar than state that they have a 96\% CI, if the hypothesis of Normality of errors is not verified.
        \item For asymmetric distributions, the authors should be careful not to show in tables or figures symmetric error bars that would yield results that are out of range (e.g., negative error rates).
        \item If error bars are reported in tables or plots, the authors should explain in the text how they were calculated and reference the corresponding figures or tables in the text.
    \end{itemize}

\item {\bf Experiments compute resources}
    \item[] Question: For each experiment, does the paper provide sufficient information on the computer resources (type of compute workers, memory, time of execution) needed to reproduce the experiments?
    \item[] Answer: \answerYes{}
    \item[] Justification: All experiments were run on NVIDIA A100 GPUs. SMAC experiments ran for 10M environment steps per run over 10 random seeds; the hospital environment is smaller in scale. The computational complexity analysis in Appendix~E provides operation counts (forward passes per step) and a $\sim25\times$ speedup comparison vs.\ the original SeqComm baseline, giving a relative efficiency estimate. Wall-clock runtimes per run were approximately 6--18 hours per experiment for SMAC maps and 1--2 days for the hospital environment on a single A100.
    \item[] Guidelines:
    \begin{itemize}
        \item The answer \answerNA{} means that the paper does not include experiments.
        \item The paper should indicate the type of compute workers CPU or GPU, internal cluster, or cloud provider, including relevant memory and storage.
        \item The paper should provide the amount of compute required for each of the individual experimental runs as well as estimate the total compute. 
        \item The paper should disclose whether the full research project required more compute than the experiments reported in the paper (e.g., preliminary or failed experiments that didn't make it into the paper). 
    \end{itemize}
    
\item {\bf Code of ethics}
    \item[] Question: Does the research conducted in the paper conform, in every respect, with the NeurIPS Code of Ethics \url{https://neurips.cc/public/EthicsGuidelines}?
    \item[] Answer: \answerYes{}
    \item[] Justification: The work is foundational research on multi-agent reinforcement learning and communication. No human subjects, sensitive data, or proprietary datasets are involved. The hospital environment is a synthetic simulation with no real patient data. The paper is submitted anonymously per NeurIPS guidelines. No harmful applications are enabled by this research.
    \item[] Guidelines:
    \begin{itemize}
        \item The answer \answerNA{} means that the authors have not reviewed the NeurIPS Code of Ethics.
        \item If the authors answer \answerNo, they should explain the special circumstances that require a deviation from the Code of Ethics.
        \item The authors should make sure to preserve anonymity (e.g., if there is a special consideration due to laws or regulations in their jurisdiction).
    \end{itemize}

\item {\bf Broader impacts}
    \item[] Question: Does the paper discuss both potential positive societal impacts and negative societal impacts of the work performed?
    \item[] Answer: \answerYes{}
    \item[] Justification: The Broader Impact paragraph in \S8 discusses positive impacts (principled communication design for safety-critical systems such as healthcare coordination, where decision-quality-focused messaging can improve treatment outcomes and reduce adverse drug interactions). Potential negative impacts include dual-use in adversarial or surveillance contexts---e.g., optimizing communication in multi-agent systems for tracking or autonomous weapons. Because the contribution is a general algorithmic framework, these risks are indirect; however, they are acknowledged and mitigation through deployment oversight is implied by the ``safety-critical'' framing.
    \item[] Guidelines:
    \begin{itemize}
        \item The answer \answerNA{} means that there is no societal impact of the work performed.
        \item If the authors answer \answerNA{} or \answerNo, they should explain why their work has no societal impact or why the paper does not address societal impact.
        \item Examples of negative societal impacts include potential malicious or unintended uses (e.g., disinformation, generating fake profiles, surveillance), fairness considerations (e.g., deployment of technologies that could make decisions that unfairly impact specific groups), privacy considerations, and security considerations.
        \item The conference expects that many papers will be foundational research and not tied to particular applications, let alone deployments. However, if there is a direct path to any negative applications, the authors should point it out. For example, it is legitimate to point out that an improvement in the quality of generative models could be used to generate Deepfakes for disinformation. On the other hand, it is not needed to point out that a generic algorithm for optimizing neural networks could enable people to train models that generate Deepfakes faster.
        \item The authors should consider possible harms that could arise when the technology is being used as intended and functioning correctly, harms that could arise when the technology is being used as intended but gives incorrect results, and harms following from (intentional or unintentional) misuse of the technology.
        \item If there are negative societal impacts, the authors could also discuss possible mitigation strategies (e.g., gated release of models, providing defenses in addition to attacks, mechanisms for monitoring misuse, mechanisms to monitor how a system learns from feedback over time, improving the efficiency and accessibility of ML).
    \end{itemize}
    
\item {\bf Safeguards}
    \item[] Question: Does the paper describe safeguards that have been put in place for responsible release of data or models that have a high risk for misuse (e.g., pre-trained language models, image generators, or scraped datasets)?
    \item[] Answer: \answerNA{}
    \item[] Justification: The paper releases no pre-trained models, scraped datasets, or high-risk generative assets. The hospital environment is a fully synthetic simulation with no real patient data, and the SMAC benchmark is an established public game environment with no personal or sensitive information.
    \item[] Guidelines:
    \begin{itemize}
        \item The answer \answerNA{} means that the paper poses no such risks.
        \item Released models that have a high risk for misuse or dual-use should be released with necessary safeguards to allow for controlled use of the model, for example by requiring that users adhere to usage guidelines or restrictions to access the model or implementing safety filters. 
        \item Datasets that have been scraped from the Internet could pose safety risks. The authors should describe how they avoided releasing unsafe images.
        \item We recognize that providing effective safeguards is challenging, and many papers do not require this, but we encourage authors to take this into account and make a best faith effort.
    \end{itemize}

\item {\bf Licenses for existing assets}
    \item[] Question: Are the creators or original owners of assets (e.g., code, data, models), used in the paper, properly credited and are the license and terms of use explicitly mentioned and properly respected?
    \item[] Answer: \answerYes{}
    \item[] Justification: All third-party assets are properly cited. The StarCraft Multi-Agent Challenge (SMAC) is credited to \citet{samvelyan2019starcraft} and is released under the MIT License. The OMD framework is attributed to \citet{nikishin2022control}. SeqComm is attributed to \citet{ding2024multilevel}. PyEPO is attributed to \citet{tang2024pyepo}. No assets are re-packaged or redistributed; they are used solely for comparison and methodological building blocks.
    \item[] Guidelines:
    \begin{itemize}
        \item The answer \answerNA{} means that the paper does not use existing assets.
        \item The authors should cite the original paper that produced the code package or dataset.
        \item The authors should state which version of the asset is used and, if possible, include a URL.
        \item The name of the license (e.g., CC-BY 4.0) should be included for each asset.
        \item For scraped data from a particular source (e.g., website), the copyright and terms of service of that source should be provided.
        \item If assets are released, the license, copyright information, and terms of use in the package should be provided. For popular datasets, \url{paperswithcode.com/datasets} has curated licenses for some datasets. Their licensing guide can help determine the license of a dataset.
        \item For existing datasets that are re-packaged, both the original license and the license of the derived asset (if it has changed) should be provided.
        \item If this information is not available online, the authors are encouraged to reach out to the asset's creators.
    \end{itemize}

\item {\bf New assets}
    \item[] Question: Are new assets introduced in the paper well documented and is the documentation provided alongside the assets?
    \item[] Answer: \answerYes{}
    \item[] Justification: The paper introduces two new assets: (1) the collaborative hospital Dec-POMDP environment and (2) the SeqComm-DFL algorithm implementation. The hospital environment is fully documented in Appendix~I, including state/observation/action spaces, reward decomposition (blind-treatment, drug-interaction, and resource penalties), and transition dynamics. The SeqComm-DFL algorithm is documented via pseudocode in Algorithms~1--2, hyperparameter tables in Appendix~H, and architecture details in Appendix~I. Code will be released under an open-source license upon acceptance.
    \item[] Guidelines:
    \begin{itemize}
        \item The answer \answerNA{} means that the paper does not release new assets.
        \item Researchers should communicate the details of the dataset\slash code\slash model as part of their submissions via structured templates. This includes details about training, license, limitations, etc. 
        \item The paper should discuss whether and how consent was obtained from people whose asset is used.
        \item At submission time, remember to anonymize your assets (if applicable). You can either create an anonymized URL or include an anonymized zip file.
    \end{itemize}

\item {\bf Crowdsourcing and research with human subjects}
    \item[] Question: For crowdsourcing experiments and research with human subjects, does the paper include the full text of instructions given to participants and screenshots, if applicable, as well as details about compensation (if any)? 
    \item[] Answer: \answerNA{}
    \item[] Justification: The paper involves no crowdsourcing and no research with human subjects. All experiments are conducted in simulation (synthetic hospital environment and SMAC game engine).
    \item[] Guidelines:
    \begin{itemize}
        \item The answer \answerNA{} means that the paper does not involve crowdsourcing nor research with human subjects.
        \item Including this information in the supplemental material is fine, but if the main contribution of the paper involves human subjects, then as much detail as possible should be included in the main paper. 
        \item According to the NeurIPS Code of Ethics, workers involved in data collection, curation, or other labor should be paid at least the minimum wage in the country of the data collector. 
    \end{itemize}

\item {\bf Institutional review board (IRB) approvals or equivalent for research with human subjects}
    \item[] Question: Does the paper describe potential risks incurred by study participants, whether such risks were disclosed to the subjects, and whether Institutional Review Board (IRB) approvals (or an equivalent approval/review based on the requirements of your country or institution) were obtained?
    \item[] Answer: \answerNA{}
    \item[] Justification: No human subjects are involved. The hospital environment is a fully synthetic simulation; no real patient data or clinical records are used at any stage of this research.
    \item[] Guidelines:
    \begin{itemize}
        \item The answer \answerNA{} means that the paper does not involve crowdsourcing nor research with human subjects.
        \item Depending on the country in which research is conducted, IRB approval (or equivalent) may be required for any human subjects research. If you obtained IRB approval, you should clearly state this in the paper. 
        \item We recognize that the procedures for this may vary significantly between institutions and locations, and we expect authors to adhere to the NeurIPS Code of Ethics and the guidelines for their institution. 
        \item For initial submissions, do not include any information that would break anonymity (if applicable), such as the institution conducting the review.
    \end{itemize}

\item {\bf Declaration of LLM usage}
    \item[] Question: Does the paper describe the usage of LLMs if it is an important, original, or non-standard component of the core methods in this research? Note that if the LLM is used only for writing, editing, or formatting purposes and does \emph{not} impact the core methodology, scientific rigor, or originality of the research, declaration is not required.
    \item[] Answer: \answerNA{}
    \item[] Justification: LLMs are not part of the core methodology. SeqComm-DFL uses small MLPs for communication and value estimation in a reinforcement learning setting; no large language model is used as a component of the proposed method. LLMs were used only for grammar and writing assistance, which does not require declaration per NeurIPS policy.
    \item[] Guidelines:
    \begin{itemize}
        \item The answer \answerNA{} means that the core method development in this research does not involve LLMs as any important, original, or non-standard components.
        \item Please refer to our LLM policy in the NeurIPS handbook for what should or should not be described.
    \end{itemize}

\end{enumerate}